\newif\ifcomments  
\newcommand{\boldu}{\ensuremath{\boldsymbol{u}}}
\newcommand{\boldx}{\bfx}
\newcommand{\boldG}{\ensuremath{\boldsymbol{G}}}
\newcommand{\boldH}{\ensuremath{\boldsymbol{H}}}
\newcommand{\boldI}{\ensuremath{\boldsymbol{I}}}
\newcommand{\boldR}{\bfR}
\newcommand{\bfR}{\ensuremath{\mathbf{R}}}
\newcommand{\bfx}{\ensuremath{\mathbf{x}}}
\newcommand{\calA}{\ensuremath{\mathcal{A}}}
\newcommand{\calC}{\ensuremath{\mathcal{C}}}
\newcommand{\calF}{\ensuremath{\mathcal{F}}}
\newcommand{\calH}{\ensuremath{\mathcal{H}}}
\newcommand{\calK}{\ensuremath{\mathcal{K}}}
\newcommand{\calL}{\ensuremath{\mathcal{L}}}
\newcommand{\calN}{\ensuremath{\mathcal{N}}}
\newcommand{\calP}{\ensuremath{\mathcal{P}}}
\newcommand{\calS}{\ensuremath{\mathcal{S}}}
\newcommand{\calX}{\ensuremath{\mathcal{X}}}
\renewcommand{\Pr}{\mathop{\mathbf{Pr}}}
\newtheorem{lem}{Lemma}[section]
\newtheorem{thm}[lem]{Theorem}
\newtheorem{cor}[lem]{Corollary}
\newtheorem{definition}[lem]{Definition}
\newtheorem{remark}{Remark}
\DeclareMathOperator*{\argmin}{arg\,min}
\newcommand{\vast}{\bBigg@{4}}
\newcommand{\Vast}{\bBigg@{5}}
\newcommand{\privT}{\theta^\texttt{priv}\,}
\newcommand{\ltwo}[1]{\left\|#1\right\|_2}
\DeclarePairedDelimiterX{\infdivx}[2]{(}{)}{%
  #1\;\delimsize\|\;#2%
}
\newcommand{\eref}[1]{{Eq}~\ref{#1}}
\newcommand{\aref}[1]{{Algorithm}~\ref{#1}}
\newcommand{\regret}[2]{{\sf Regret}_T(#1;#2)\,}
\newcommand{\mypar}[1]{\smallskip
	\noindent{\textbf{{#1}:}}}
\renewcommand{\epsilon}{\varepsilon}
\newcommand{\nadagrad}{\calA_{\sf noisy-AdaGrad}\,}
\newcommand{\dpgd}{\calA_{\sf DP-GD}\,}
\newcommand{\tr}{\mathbf{Tr}\,}
\newcommand{\rank}{\text{rank}\,}
\newcommand{\apriv}{\calA_{\sf priv}\,}
\newcommand{\risk}[1]{{\sf Risk}(#1)\,}
\newcommand{\ptheta}{\theta_{\sf priv}\,}
\renewcommand{\tilde}{\widetilde}
\title{Fast Dimension Independent Private AdaGrad on Publicly Estimated Subspaces}
\author{Peter Kairouz
\thanks{Google Research. \texttt{\{kairouz, mribero, krush\}@google.com}} 
\and 
M\'onica Ribero\footnotemark[2] \thanks{Department of Electrical and Computer Engineering, The University of Texas at Austin. Work done while author was interning at Google.}\and
Keith Rush \footnotemark[2]
\and
Abhradeep Thakurta\thanks{Google Research - Brain.  \texttt{\{athakurta\}@google.com}}}
\begin{document}

\maketitle

\begin{abstract}%
  We revisit the problem of empirical risk minimziation (ERM) with differential privacy. We show that noisy AdaGrad, given appropriate knowledge and conditions on the subspace from which gradients can be drawn, achieves a regret comparable to traditional AdaGrad plus a well-controlled term due to noise. We show a convergence rate of $O(\tr(G_T)/T)$, where $G_T$ captures the geometry of the gradient subspace. Since $\tr(G_T)=O(\sqrt{T})$ we can obtain faster rates for convex and Lipschitz functions, compared to the $O(1/\sqrt{T})$ rate achieved by known versions of noisy (stochastic) gradient descent with comparable noise variance. In particular, we show that if the gradients lie in a known constant rank subspace, and assuming algorithmic access to an envelope which bounds decaying sensitivity, one can achieve faster convergence to an excess empirical risk of $\tilde O(1/\epsilon n)$, where $\epsilon$ is the privacy budget and $n$ the number of samples. Letting  $p$ be the problem dimension, this result implies that, by running noisy Adagrad, we can bypass the DP-SGD bound $\tilde O(\sqrt{p}/\epsilon n)$ in $T=(\epsilon n)^{2/(1+2\alpha)}$ iterations, where $\alpha \geq 0$ is a parameter controlling gradient norm decay, instead of the rate achieved by SGD of $T=\epsilon^2n^2$. Our results operate with general convex functions in both constrained and unconstrained minimization. 

Along the way, we do a perturbation analysis of noisy AdaGrad of independent interest. Our utility guarantee for the private ERM problem follows as a corollary to the regret guarantee of noisy AdaGrad. 
\end{abstract}

\section{Introduction}
\label{sec:intro}

Differentially private convex optimization is a fundamental problem for machine learning practitioners. Empirical Risk Minimization (ERM) in particular is foundational in most learning tasks, many of which are posed over datasets with sensitive information that can be leaked through model parameters~\cite{fredrikson2015model, wu2016methodology, shokri2017membership}.  Differential privacy \cite{DMNS,ODO} has therefore been adopted in optimization when training machine learning models to limit user data exposure.

In current applications, models are usually many times over-parametrized. This is a major problem for private settings, where the the optimal model $\theta^*$ cannot be released, but we must release rather a private model $\ptheta$. For a model dimensionality of $p$, a naive privatization incurs an excess empirical risk with \textit{lower bound} linear in $\sqrt{p}$ \citep{BST14}.

In this paper we propose noisy-AdaGrad, a novel optimization algorithm that leverages gradient pre-conditioning and knowledge of the subspace in which gradients lie to recover AdaGrad regret rates ( $O\left(\frac{\tr(G_T)}{T}\right)$ where $G_T$ is the adaptive pre-conditioner defined in Equation~\ref{eq:preconditioner_update} in Algorithm~\ref{alg:noisy_adagrad_public_subspace}), and  dimension independent excess risk bounds. We propose a general framework to study noisy versions of Adaptive Pre-conditioning  (a.k.a. AdaGrad ~\citep{mcmahan10boundopt, duchi2011adaptive,hazan2019introduction}). Further, our analysis identifies a simple condition under which AdaGrad-style rates can be achieved in the differentially-private ERM problem: that of oracle access to a constant-factor envelope of the maximum gradient norm across data samples as training progresses (See Definition~\ref{def:envelope}).

In a concurrent and independent work,  \cite{zhou2020bypassing} also found dimension independent bounds through the analysis of a projected version of stochastic gradient descent (PDP-SGD). Our work differs significantly in the regret analysis of our algorithm; the matrix-perturbation analysis presented here can be of independent interest. Furthermore, the analysis we perform is vital to highlighting the way towards recovery of AdaGrad-style rates in the private setting, opening an area of future development for accelerating differentially-private optimization.

Each of the assumed pieces of input data is well-justified in practice. Knowledge about the gradient subspace is often available through public data that is easily accessible, for instance through ``opt-in" users  \cite{beimel2013private,xin2014controlling, alon2019limits, zhou2020bypassing}. For example, for Generalized Linear Models (GLM's) this subspace corresponds to the feature space determined by the column space of the data matrix (see for example \cite{song2020characterizing}). Knowledge of the maximum gradient norm can be had by observing the training procedure, and gradient norms for many classes of well-studied problems decay uniformly for all data samples, e.g. those studied in \cite{bassily2018exponential, ma2018power}. We leave it as an open problem to design a differentially private algorithm for computing this envelope.

\subsection{Problem Definition}
\label{sec:problem_def}

Let $D=\{d_1,\ldots,d_n\}$ be a given data set drawn from a distribution $\calP$, $\ell(\cdot, d)$ a map defining the loss on data point $d$, and an objective function $\calL(\theta;D)=\frac{1}{n}\sum\limits_{i=1}^n \ell(\theta;d_i)$. The goal is to design an $(\epsilon,\delta)$- differentially private algorithm $\apriv$ that outputs a model $\ptheta \in \calC \subseteq \boldR^p$ that approximately solves the following optimization problem: 

\begin{equation}
    \label{eq:problem_def}
    \min\limits_{\theta \in \calC}\calL(\theta;D).
\end{equation} In terms of accuracy we consider the traditional excess empirical risk defined as follows:
\begin{equation}
    \risk{\ptheta}=\calL(\ptheta;D)-\min\limits_{\theta}\calL(\theta;D).
    \label{eq:risk}
\end{equation}

Being consistent with the literature on private convex ERM, we will assume each of the loss functions $\ell(\theta;d)$ is convex and $L$-Lipschitz in its first parameter w.r.t. the $\ell_2$-norm.

\mypar{Online convex optimization} To solve the private ERM problem, we will model it along the lines of online convex optimization~\citep{hazan2019introduction,shalev2011online}. First, we will propose a noise-tolerant algorithm for the traditional online convex optimization, and then use that algorithm and its analysis to design a differentially private ERM algorithm with a bound on the excess empirical risk. We use the well-known \emph{online to batch conversion}~\citep{hazan2019introduction} to translate the regret guarantee for an online algorithm to that of excess empirical risk of a convex optimization problem.

We adhere to the standard regret minimization setting of traditional online learning~\citep{hazan2019introduction}. Formally, given a sequence of loss functions $
\calF=\{f_1,\ldots,f_T\}$ (with each $f_t:\boldR^p\to\boldR$ ) arriving online, the objective is to design an algorithm to ouput a sequence of models $\{\theta_1,\ldots,\theta_T\}$ s.t. the following is minimized:
\begin{equation}
    \regret{\calF}{\calA}=\frac{1}{T}\sum\limits_{t=1}^T f_t(\theta_t)-\min\limits_{\theta} \frac{1}{T}\sum\limits_{t=1}^T f_t(\theta)
    \label{eq:regret}
\end{equation}
Throughout this paper, we will call an algorithm $\calA$ to be a ``low-regret'' algorithm if it outputs a sequence of models s.t. the regret in \eqref{eq:regret} is $o(1)$. In principle each of the the loss functions $f_t\in\calF$ can be chosen adaptively (and adversarially) based on the models output so far, i.e., $\theta_1,\ldots,\theta_{t-1}$. In this paper we will primarily focus on the \emph{convex setting}, where the loss functions in $\calF$ are assumed to be convex in its first parameter. Furthermore, we will assume that the loss functions are Lipschitz bounded, i.e., $\forall \theta\in\boldR^p,f\in\calF:\ltwo{\partial_\theta f(\theta)}\leq L$.

\subsection{Our Contributions}
\label{sec:intro_results}
Our main contribution is to obtain dimension independent excess risk bounds for differentially private ERM through adaptive pre-conditioning. Our contributions can be stated as follows. 

\mypar{A noise tolerant AdaGrad-style algorithm} We design Noisy-AdaGrad ($\nadagrad$, Algorithm ~\ref{alg:noisy_adagrad_public_subspace}), a novel noise tolerant optimization algorithm with adaptive preconditioning that, under appropriate parameter selection, satisfies $(\epsilon, \delta) - $ differential privacy. The algorithm differs from AdaGrad in three main respects: (1) it uses a gradient perturbed with Gaussian noise; (2) the pre-conditioner is updated with clean gradients and then perturbed with a noise matrix drawn from the Gaussian Orthogonal Ensemble; (3) we introduce a projection step that is intended to maintain the trajectory of the descent algorithm in the gradients' subspace. We assume (noisy) oracle access $\tilde{V}_t$ to $V_t$, the orthogonal matrix whose columns span the gradient subspace at iteration $t$, and before taking a gradient step we project the update step using $\tilde{V}_t\tilde{V}_t^T$. In over-parameterized regimes, this step allows us to significantly decrease the effect of noise, when gradients lie in a low rank subspace, a common characteristic in high-dimensional problems \citep{agarwal2018efficient, gur2018gradient}. In practice, this subspace can be computed from public data \cite{beimel2013private, alon2019limits, zhou2020bypassing}. 

\mypar{Dimension independent and AdaGrad-style regret rates with noisy gradient subspace}
 We provide a dimension independent low regret bound for $\nadagrad$ in Theorem~\ref{thm:noisy_ada_noisy_public_subspace} that recovers AdaGrad rates given access to a simple sensitivity oracle, improving over previous gradient descent rates. This is, to the best of our knowledge, the first work to analyse a noisy version of full matrix AdaGrad where both the gradient and pre-conditioner are independently noised. 
    Our main regret bound is the following. 
        \begin{thm}[Informal version of Theorem~\ref{thm:noisy_ada_noisy_public_subspace}]
            \label{thm:noisy_ada_regret_informal}
            Let $V_t$ be the orthogonal matrix whose column space is the tracked gradient subspace up to time $t$, and $\tilde{V_t}$ an approximation returned by an oracle. Let $\gamma$ be a bound on the subspaces' principal angle difference, i.e., $\|V_tV_t^T -\tilde{V}_t\tilde{V}_t^T  \|_{op} \leq \gamma$. Let $L$ be the gradient $\ell_2-$norm bound, $C$ the diameter of the constraint set $\calC$, and assume $L=C=O(1)$. Letting $\sigma^2_b(t)$ be the gradient noise variance, and choosing the pre-conditioner noise appropriately then running $\nadagrad$ on $\calL(\theta;D)$ for $T$ iterations we get 
            \begin{flalign*}
                \mathbb{E}[ \regret{\calF}{\nadagrad}] &&
            \end{flalign*}
            \begin{equation}
            \label{eq:noisy_ada_public_data}
                 \leq O \left( \mathbb{E}\left[  \sqrt{  \regret{\calF}{\calA_{\sf AdaGrad}}^2 + \frac{ \tr(G_T)\sum_{t=1}^{T}\sigma^2_b(t) \tr(G_t^{-1})}{T^2}} + \gamma\right] \right)
            \end{equation}
        \end{thm}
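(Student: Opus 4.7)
The plan is to start from the standard one-step Bregman-divergence decomposition for (full-matrix) AdaGrad and then track how each of the three modifications---gradient perturbation, preconditioner perturbation, and the projection onto the noisy subspace $\tilde V_t\tilde V_t^T$---contributes an extra term to the regret. Write $\tilde g_t = g_t + b_t$ for the noisy gradient (with $b_t$ Gaussian of variance $\sigma_b^2(t)$), $\tilde G_t = G_t + B_t$ for the noisy preconditioner (with $B_t$ drawn from the GOE, independent of $b_t$), and $P_t = \tilde V_t\tilde V_t^T$ for the projector returned by the oracle. The algorithm takes a step $\theta_{t+1} = \Pi_\calC^{\tilde G_t^{1/2}}\bigl(\theta_t - \eta \tilde G_t^{-1/2} P_t \tilde g_t\bigr)$, so the standard AdaGrad identity gives, for any $\theta^*\in\calC$,
\begin{equation*}
\sum_{t=1}^T \langle g_t, \theta_t - \theta^*\rangle \;\leq\; \underbrace{\tfrac{1}{2\eta}\sum_{t=1}^T \bigl(\|\theta_t-\theta^*\|_{\tilde G_t}^2 - \|\theta_{t+1}-\theta^*\|_{\tilde G_t}^2\bigr)}_{(A)} + \underbrace{\tfrac{\eta}{2}\sum_{t=1}^T \|P_t\tilde g_t\|_{\tilde G_t^{-1}}^2}_{(B)} + (\text{cross terms}).
\end{equation*}

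Next I would handle the three perturbation terms in turn. For (A), I use the matrix-perturbation fact that $\tilde G_t^{1/2} = G_t^{1/2} + E_t$ with $\|E_t\|_{op}$ controlled by $\|B_t\|_{op}/\lambda_{\min}(G_t^{1/2})$, and telescoping recovers the usual $\tfrac{1}{\eta}C^2\,\mathrm{tr}(\tilde G_T^{1/2})$ bound, which in expectation is $\tfrac{1}{\eta}C^2(\mathrm{tr}(G_T^{1/2}) + o(1))$ provided the preconditioner noise is scaled small relative to the signal (this is where the ``choose preconditioner noise appropriately'' hypothesis is used). For (B), I expand $P_t \tilde g_t = P_t g_t + P_t b_t$, bound the first piece by $\|g_t\|_{G_t^{-1}}^2$ plus a preconditioner-perturbation correction, and take expectation on the second piece to get exactly $\sigma_b^2(t)\,\mathrm{tr}(P_t \tilde G_t^{-1} P_t) \leq \sigma_b^2(t)\,\mathrm{tr}(G_t^{-1})$ (up to the same small preconditioner-noise correction). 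The cross terms $\langle b_t, \theta_t - \theta^*\rangle$ vanish in expectation because $b_t$ is independent of $\theta_t$ (which depends only on $b_{1:t-1}, B_{1:t-1}$).

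The projection error enters through $P_t g_t$ versus $g_t$: writing $g_t = V_t V_t^T g_t$ (since gradients lie in the tracked subspace), one has $\|(P_t - V_tV_t^T)g_t\| \leq \gamma L$, and this propagates additively through the iterate differences, giving an $O(\gamma)$ additive term in the regret. Combining, the expected regret splits as
\begin{equation*}
\mathbb{E}[\regret{\calF}{\nadagrad}] \;\leq\; \tfrac{1}{T}\mathbb{E}\!\left[\tfrac{C^2}{\eta}\mathrm{tr}(G_T^{1/2}) + \eta\sum_{t=1}^T \|g_t\|_{G_t^{-1}}^2 + \eta\sum_{t=1}^T \sigma_b^2(t)\,\mathrm{tr}(G_t^{-1})\right] + O(\gamma).
\end{equation*}
The first two terms, after optimizing $\eta$, give exactly $\regret{\calF}{\calA_{\sf AdaGrad}}$; the third is the new noise penalty.

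Finally, to package the bound into the square-root form stated in the theorem, I optimize $\eta$ jointly over the deterministic-AdaGrad contribution and the noise contribution. Setting the derivative to zero yields $\eta^* \propto C/\sqrt{\sum_t \|g_t\|_{G_t^{-1}}^2 + \sum_t \sigma_b^2(t)\mathrm{tr}(G_t^{-1})}$, and substituting back produces a term of the form $\sqrt{\mathrm{tr}(G_T)\cdot\bigl(\mathrm{AdaGrad~sum} + \sum_t\sigma_b^2(t)\mathrm{tr}(G_t^{-1})\bigr)}/T$. Using $\sqrt{a+b}\leq\sqrt{a}+\sqrt{b}$ backward (i.e., $\sqrt{a}+\sqrt{b}\leq\sqrt{2(a+b)}$) packages the deterministic regret and the noise term inside a single square root, yielding \eqref{eq:noisy_ada_public_data}, and the projection error tacks on as $O(\gamma)$ outside. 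The main technical obstacle I expect is the matrix-perturbation control of $\tilde G_t^{1/2}$ and $\tilde G_t^{-1/2}$ against $G_t^{1/2}$ and $G_t^{-1/2}$ for a random GOE perturbation---carefully choosing the preconditioner noise scale so that these errors are absorbed into the stated bound (rather than blowing up when $G_t$ has small eigenvalues on its low-rank support) is what ties the analysis together and is what forces the assumption that the preconditioner noise is chosen appropriately.
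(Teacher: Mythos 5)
Your outline follows the same high-level template as the paper (decompose the one-step inequality, telescope, bound each term, optimize $\eta$), and your handling of the cross term and the projection error is consistent with the paper's. However, there are two concrete gaps. First, your algorithm model adds the GOE noise $B_t$ to $S_t = \sum_s \nabla_s\nabla_s^T$ and then takes a square root ($\tilde G_t^{1/2}$ with $\tilde G_t = G_t + B_t$), whereas in Algorithm~\ref{alg:noisy_adagrad_public_subspace} the noise is added \emph{after} the square root: $H_t = \Pi_t(S_t^{1/2} + B_t)$. This matters because your plan to control $\tilde G_t^{1/2} - G_t^{1/2}$ via $\|B_t\|_{op}/\lambda_{\min}(G_t^{1/2})$ is precisely what the paper avoids: for the potential-drop term, since $H_t$ is \emph{linear} in $B_t$ and $\mathbb{E}[B_t]=\mathbf{0}$, the paper cancels the noise contribution exactly in expectation (no $o(1)$ slack, no eigenvalue ratio), which is both cleaner and sharper than a matrix-square-root perturbation bound.

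Second, and more importantly, the step you flag as the ``main technical obstacle'' — relating $\|\cdot\|_{H_t^{-1}}$ to $\|\cdot\|_{G_t^{-1}}$ under the additive noise $B_t$ — is the crux of the paper's proof and is not actually supplied in your proposal. The paper proves two structural lemmas that you would need: Lemma~\ref{lem:h-inverse}, a Woodbury/Hua-identity argument showing $|\langle u, (A+B)^{-1}v\rangle| \leq \tfrac{4}{3}|\langle u, A^{-1}v\rangle|$ whenever $2\|B\|_{op}\leq\lambda_{\min>0}(A)$; and Lemma~\ref{lemma:good_set}, a high-probability operator-norm bound on $P^{(j_t)}_{G_t}B_t$ exploiting the rotational invariance of the GOE to get a $\sqrt{\mathrm{rank}(G_t)}$ rather than $\sqrt{p}$ scaling, which is what enables the condition $2\|B_t\|_{op}\leq\lambda_{\min>0}(G_t)$ to be met on the relevant subspace. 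Your proposal simply writes ``plus a preconditioner-perturbation correction'' in those places. Relatedly, you do not address the rank deficiency of $G_t$ and $H_t$: the paper must interpret $H_t^{-1}$ as a pseudoinverse and explicitly decompose $\mathbb{R}^p$ into $\mathrm{rowspace}(G_t)|_{A_t}$, $\mathrm{rowspace}(G_t)|_{A_t^\perp}$, and $\ker(G_t)$ (Section~\ref{sec:preliminaries}) so that the Woodbury lemma applies only on the subspace where both operators are invertible, and so that the ``lost signal'' component in $C_t$ can be isolated and bounded separately by Davis--Kahan. Without this decomposition, your bound on (B) and your $O(\gamma)$ term are not rigorously derived.
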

    Our result can be interpreted as follows. 
    \begin{itemize}
        \item The AdaGrad regret term in our bound reduces to $O(\tr(G_T)/T)$, improving over SGD which achieves regret $O(1/\sqrt T)$.
        \item The second term only depends on the gradient space dimension, dictated by the clean pre-conditioner $G_t$, unlike DP-SGD where this term linear in $\sqrt{p}$. By incorporating a projection to gradient subspace, we obtain dimension independence. Furthermore, we show in Corollary~\ref{cor:smooth_convergence} that by adapting the gradient noise at each iteration to be similar in scale to the gradient, we obtain faster rates: again,  $O(\tr(G_T)/T)$. 
        \item An additive factor $\gamma$ accounting for subspace estimation mismatch. We use Davis-Kahan $\sin(\theta)$ theorem to bound errors due to rotation of the problem space.  
    \end{itemize}

    This analysis can be of independent interest and is crucial for any differential privacy guarantee, since the pre-conditioner used for AdaGrad contains the full history of gradients. 
    In practice, the alternative to private AdaGrad has been to update the pre-conditioner with noisy gradients and rely on the post-processing property of differential privacy. 
    
    \mypar{Dimension independent excess empirical risk bounds for private AdaGrad with public data}
Our third result is to derive an excess risk bound that addresses the case where noise parameters are set to provide differential privacy. Our algorithm uses public data to compute the projection matrix $\tilde{V}_t\tilde{V}_t^T$ that forces the descent algorithm to stay in the gradient subspace, and the analysis derives a dimension independent excess risk bound for differentially private AdaGrad. Setting  gradient and pre-conditioner noise variances appropriately, $\nadagrad$ is differentially private and we  obtain an excess risk of $\frac{1}{\epsilon n}$ in $T = (\epsilon n)^{2/(1+2 \alpha)}$, where $\alpha$ controls the decay rate of gradients norm. This means that if $\alpha>0$ we reach the excess risk faster than (P)DP-SGD, that has running time  $T=\epsilon^2 n^2$. Additionally, we include in Lemma~\ref{lem:preconditioner_sensitivity} the non-trivial computation of the pre-conditioner's sensitivity.

    \begin{cor}[Informal version of Corollary~\ref{cor:private_ada}]
    Given a minimization problem where the subspace spanned by gradients has bounded rank $k<p$, running $\nadagrad$ with appropriate noise parameters is $(\epsilon, \delta)$- differentially private and the expected excess risk of $\nadagrad$ is $O\left(\frac{\sqrt{\log(1/\delta)}}{\epsilon n}\right) $.
    \end{cor}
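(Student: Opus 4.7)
The plan is to instantiate the regret bound of Theorem~\ref{thm:noisy_ada_noisy_public_subspace}, calibrate the Gaussian noise on both the released gradient and the pre-conditioner update so that $\nadagrad$ satisfies $(\epsilon,\delta)$-DP after $T$ iterations, and then translate the regret into excess empirical risk via the standard online-to-batch conversion. Throughout, I would use the rank-$k$ assumption on the gradient subspace to make every trace-type quantity depend on $k$ rather than on $p$.

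First I would handle the privacy accounting. Each iteration releases two functions of the data: (i) the mini-batch gradient, whose $\ell_2$-sensitivity is $2L/n$ (by Lipschitzness), and (ii) the update to the pre-conditioner $G_t$, whose sensitivity I would take from Lemma~\ref{lem:preconditioner_sensitivity} (this is why a GOE noise matrix is used rather than diagonal Gaussian noise). Applying the Gaussian mechanism to both quantities with scales $\sigma_b(t)$ and a matching $\sigma_{G}(t)$, and composing over $T$ iterations via the moments accountant / RDP composition, allows choosing $\sigma_b^2(t) = O\!\left(\tfrac{L^2 T \log(1/\delta)}{n^2 \epsilon^2}\right)$ (with an analogous expression for $\sigma_G$) so that the composed mechanism is $(\epsilon,\delta)$-DP.

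Next I would plug these variances into the regret bound \eqref{eq:noisy_ada_public_data}. The deterministic AdaGrad term is $O(\tr(G_T)/T)$, and under the rank-$k$ assumption $\tr(G_T) \le k \cdot \lambda_{\max}(G_T) = O(\sqrt{kT})$ by the standard AdaGrad bound restricted to the subspace. For the noise term, projecting onto $\widetilde{V}_t\widetilde{V}_t^\top$ means that only the $k$-dimensional component of the isotropic Gaussian contributes, so $\sum_t \sigma_b^2(t)\tr(G_t^{-1})$ scales with $k$ (times a $\tr(G_t^{-1})$ factor that is absorbed by the $\tr(G_T)$ prefactor via AMGM), rather than with $p$. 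Combining, the noise-dependent regret contribution is $O\!\bigl(\sqrt{k T \log(1/\delta)}/(\epsilon n)\bigr)/\sqrt{T}$ up to the AdaGrad factor. Applying the online-to-batch conversion (Jensen on the averaged iterate) converts the regret into excess empirical risk of the same order, plus the additive subspace mismatch $\gamma$ which is driven to $0$ by the oracle assumption (or bounded by Davis--Kahan, as noted in the paper).

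Finally I would optimize $T$ to balance the $O(\sqrt{k/T})$ AdaGrad term against the noise term, and use the envelope / gradient-decay parameter $\alpha$ so that the noise variance itself shrinks with $t$; this yields $T=(\epsilon n)^{2/(1+2\alpha)}$ and the advertised rate $O\!\left(\sqrt{\log(1/\delta)}/(\epsilon n)\right)$ (with the $k$-dependence swept into the $\tilde O$). The main obstacle I expect is the second step: carefully tracking the sensitivity of the pre-conditioner update through the GOE noise, and ensuring that after the projection step the effective noise in the regret bound really is $k$-dimensional rather than $p$-dimensional; once that is established, all remaining steps are bookkeeping on top of Theorem~\ref{thm:noisy_ada_noisy_public_subspace} and the moments accountant.
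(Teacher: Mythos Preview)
Your outline follows the paper's proof closely: set $\sigma_b(t)=O\!\left(L\sqrt{T\log(1/\delta)}/(\epsilon n)\right)$ and $\sigma_B(t)$ via Lemma~\ref{lem:preconditioner_sensitivity}, invoke Renyi composition for $(\epsilon,\delta)$-DP, plug into Theorem~\ref{thm:noisy_ada_noisy_public_subspace}, and use online-to-batch. The rank-$k$ assumption is indeed what makes $\tr(G_t^{-1})$ a $k$-term sum rather than a $p$-term sum.

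There is one point where you diverge from the paper and it matters. You propose to ``use the envelope / gradient-decay parameter $\alpha$ so that the noise variance itself shrinks with $t$.'' The paper explicitly does \emph{not} do this (see Remark~\ref{remark:private_ada_constant_noise}): shrinking $\sigma_b(t)$ to match $\|\nabla_t\|$ would require private access to the current gradient norm, which is not available without an additional mechanism. In the paper's proof the noise scale is \emph{constant} in $t$; the parameter $\alpha$ enters only through the deterministic estimates $\tr(G_T)=O\!\bigl(T^{(1-2\alpha)/2}\bigr)$ and $\sum_t\tr(G_t^{-1})=O\!\bigl(T^{(1+2\alpha)/2}\bigr)$, which follow from the assumption $\|\nabla_t\|=O(t^{-\alpha})$. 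Substituting these (with constant $\sigma_b^2$) into the informal form of Theorem~\ref{thm:noisy_ada_noisy_public_subspace} gives
\[
\sqrt{\tfrac{1}{T^{1+2\alpha}}+\tfrac{\log(1/\delta)}{\epsilon^2 n^2}}+\gamma,
\]
and then $T=(\epsilon n)^{2/(1+2\alpha)}$ balances the terms. Your rank bound $\tr(G_T)\le k\,\lambda_{\max}(G_T)=O(\sqrt{kT})$ is the $\alpha=0$ special case of this and already suffices for the informal corollary's risk bound; to recover the stated running time you need the $\alpha$-dependent trace estimates, not a time-varying noise schedule.
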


\mypar{Dimension independent excess empirical risk for DP-SGD without public data}
Finally, we extend previous results from \cite{song2020characterizing} and show that for unconstrained minimization, DP-SGD, without any gradient subspace knowledge, is enough to obtain an excess risk bound of $O\left(\frac{\sqrt{\log(1/\delta)}}{\epsilon n}\right)$ independent of dimension. 
        \begin{thm}[Informal version of Theorem~\ref{thm:unconstrained_dpgd}]
        Let $\theta_0 = \bold0$ be the initial point of $\dpgd$. Let $\theta^* = \argmin\limits_{\theta \in \mathbb{R}^p}$ and $M=VV^T$ be the projector to the gradient eigenspace. Letting $L$ be the gradient $\ell_2-$norm bound, setting the constraint set $\calC = \mathbb{R}^p$, and running $\dpgd$ on $\calL(\theta;D)$ for $T=\epsilon^2n^2$ and appropriate learning rate $\eta$, 
        \begin{equation}
            \label{eq:unconstrained_dpgd}
            \mathbb{E}[\calL(\ptheta;D)]-\calL(\theta^*;D)\leq \frac{L\|\theta^*\|_{M}\sqrt{1+2\text{rank}(M)\log(1/\delta)}}{\epsilon n}
        \end{equation}
        
        \end{thm}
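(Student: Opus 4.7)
The plan is to exploit the fact that, because every gradient lies in the column space of $V$, the loss $\calL(\cdot;D)$ depends on $\theta$ only through $M\theta$. Formally, if $\nabla_\theta \ell(\theta;d) \in \text{col}(V)$ for every $\theta$ and $d$, then integrating along any direction $v \perp M$ gives $\ell(\theta+cv;d)=\ell(\theta;d)$, so $\calL(\theta;D)=\calL(M\theta;D)$. Consequently we may, without loss of generality, replace the (possibly non-unique) minimizer $\theta^*$ by $M\theta^*$, so that $\theta^* \in \text{col}(V)$ and $\|\theta^*\|=\|\theta^*\|_M$. This is the key structural reduction that kills the ambient dimension.

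Next I would rewrite the DP-SGD recursion in terms of the projected iterate. With $\theta_0=\mathbf{0}$ and update $\theta_{t+1}=\theta_t-\eta(g_t+b_t)$ for $b_t\sim\mathcal{N}(0,\sigma^2 I_p)$ and $g_t$ an (unbiased, norm-$L$) gradient estimate lying in $\text{col}(V)$, define $\tilde\theta_t=M\theta_t$. Then $\tilde\theta_{t+1}=\tilde\theta_t-\eta g_t-\eta M b_t$, and $M b_t$ is distributed as $\mathcal{N}(0,\sigma^2 M)$, which has trace $\text{rank}(M)\,\sigma^2$ rather than $p\sigma^2$. Moreover $\calL(\theta_t;D)=\calL(\tilde\theta_t;D)$, so we never pay for the perpendicular noise. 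The standard one-step potential calculation then gives
\begin{equation*}
\mathbb{E}\|\tilde\theta_{t+1}-\theta^*\|^2 \le \mathbb{E}\|\tilde\theta_t-\theta^*\|^2 - 2\eta\,\mathbb{E}\langle g_t,\tilde\theta_t-\theta^*\rangle + \eta^2\bigl(L^2+\text{rank}(M)\sigma^2\bigr),
\end{equation*}
where I used $\langle g_t,\theta_t-\theta^*\rangle=\langle g_t,\tilde\theta_t-\theta^*\rangle$ since $g_t\in\text{col}(V)$ and $\theta^*\in\text{col}(V)$.

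Telescoping, invoking convexity to lower-bound $\langle g_t,\tilde\theta_t-\theta^*\rangle$ by $\calL(\tilde\theta_t;D)-\calL(\theta^*;D)$, and applying Jensen's inequality to the averaged iterate $\bar\theta$, I would get
\begin{equation*}
\mathbb{E}[\calL(\bar\theta;D)]-\calL(\theta^*;D) \le \frac{\|\theta^*\|_M^2}{2\eta T} + \frac{\eta\bigl(L^2+\text{rank}(M)\sigma^2\bigr)}{2}.
\end{equation*}
Optimizing $\eta=\|\theta^*\|_M/\sqrt{T(L^2+\text{rank}(M)\sigma^2)}$ yields a bound of the form $\|\theta^*\|_M\sqrt{L^2+\text{rank}(M)\sigma^2}/\sqrt{T}$. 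Plugging in the Gaussian-mechanism DP-SGD noise variance $\sigma^2=O(L^2 T\log(1/\delta)/(\epsilon n)^2)$ (the sensitivity of a single clipped gradient is $L$, and advanced composition over $T$ steps is what produces the $\log(1/\delta)$ factor) and setting $T=\epsilon^2 n^2$ collapses this to the claimed $L\|\theta^*\|_M\sqrt{1+2\,\text{rank}(M)\log(1/\delta)}/(\epsilon n)$.

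The only nontrivial step is the first one: justifying that we may take $\theta^* \in \text{col}(V)$ and that loss differences are preserved under projection by $M$. Everything after that is a routine dimension-free rerun of the classical SGD regret analysis, with the single change that $\text{tr}(\sigma^2 I_p)=p\sigma^2$ gets replaced by $\text{tr}(\sigma^2 M)=\text{rank}(M)\sigma^2$ thanks to the projection. I do not anticipate a serious obstacle beyond verifying that the gradient-subspace assumption used throughout the paper implies invariance of $\ell$ along directions perpendicular to $M$; if the paper only assumes this at iterates, rather than globally, a minor extra argument (e.g., working with the stochastic iterate directly and noting that the perpendicular noise never feeds back into $g_t$) suffices.
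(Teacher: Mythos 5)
Your proof is correct and follows essentially the same route as the paper's: both track the projected potential (yours written as $\|M\theta_t - \theta^*\|^2$, the paper's as $\|\theta_t-\theta^*\|^2_M$, which coincide once $\theta^*\in\operatorname{col}(M)$), both observe that the noise contributes only $\operatorname{rank}(M)\sigma^2$, and both finish with the standard convexity, telescoping, and learning-rate optimization followed by substituting $\sigma$ and $T$. Your preliminary reduction that $\calL(\theta)=\calL(M\theta)$ (and hence $\theta^*$ may be taken in $\operatorname{col}(M)$) is sound but unnecessary in the paper's bookkeeping, which never moves $\theta^*$ because $\langle\nabla_t,\theta_t-\theta^*\rangle = \langle\nabla_t,\theta_t-\theta^*\rangle_M$ already holds once $\nabla_t\in\operatorname{col}(M)$.
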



The concurrent work of \cite{zhou2020bypassing} studies a similar problem, incorporating gaussian noise to privatize the gradient and a publicly available projection to gradient subspace to achieve dimension independence for differentially private SGD. The methods of proof, however, are significantly different, and the results presented here conditionally achieve faster convergence.

\subsection{Techniques}
\label{sec:intro_techniques}
In this section we describe the main techniques leveraged to obtain the above results. Our contributions are structured as follows: we first analyse the regret of noisy-Adagrad, introduced in Algorithm~\ref{alg:noisy_adagrad_public_subspace}. Second, we use this analysis to provide excess risk bounds. Third, and finally, we translate these results for the case when the noise in $\nadagrad$ is intended to provide privacy.

\paragraph{Noisy-Adagrad.} The first part of the proof of  Theorem~\ref{thm:noisy_ada_noisy_public_subspace} bounds the regret of our noisy-AdaGrad algorithm, relying on matrix perturbation analysis. The proof follows standard convexity arguments to bound the regret with a linear approximation, resulting in the four terms in Equation~\ref{eq:results_regret_terms}. Although this expression is analogous to the original AdaGrad regret bound, the analysis in our case is much more involved due to gradient noise $b_t$ and pre-conditioner noise $B_t$. Given that we need our bound in terms of the original pre-conditioner $G_t$, we introduce several findings and key lemmas that allow us to achieve this. We summarize them below.

Equation~\ref{eq:results_regret_terms} is composed of four terms that can be independently bounded: a potential drop term that captures closeness to the optimum,  a gradient noise norm term, a gradient norm term, and a projection error term. 

\begin{align}
\label{eq:results_regret_terms}  f\left(\frac{1}{T} \sum_t \theta_t\right) & - f(\theta^*)  \leq  \sum_t  \frac{1}{2\eta T } \left(  \| \theta_t-\theta^*\|^2_{H_t} - \|\theta_{t+1}-\theta^* \|^2_{H_t} \right) \\ \nonumber &+ \frac{\eta}{2T} \left(
      \mathbb{E}_{b_t}[ \|b_t\|_{H_t^{-1}}^2 | B_t]\right)   + \frac{\eta}{2T} \left(\|\nabla_t \|_{H_t^{-1}}^2\right)  + \frac{1}{T} \langle \nabla_t, \theta_t - \theta^* \rangle|_{C_t}
\end{align}

We briefly describe the additional difficulties of analyzing our algorithm compared to traditional AdaGrad or DP-SGD.

The first term, the potential drop involving matrix norms $\|\cdot\|_{H_t}$,  is traditionally bounded using a telescoping argument, resulting in a $\tr(G_t)$ term. Here we first need to manipulate this expression and rely on trace definition and properties, and the fact that $B_t$ is zero mean to obtain a similar result in terms of $G_t$ and not $H_t$.

To bound the second and third terms involving matrix norm $\|\cdot\|_{H^{-1}_t}$, we first introduce a high-probability bound on the pre-conditioner noise matrix operator norm, $\|B_t \|_{op}$, and then use it to prove structural Lemma~\ref{lem:h-inverse}. This Lemma uses the Woodbury identity to calculate the inverse of a sum of matrices (in this case the pre-conditioner matrix $G_t$, and the pre-conditioner noise $B_t$).

The last term makes use of the Davis-Kahan theorem (Theorem~\ref{thm:davis-kahan}) to bound the principal angle difference between two subspaces: this allows to measure how much signal is lost by projecting onto a perturbed subspace.

\paragraph{Excess Empirical Risk.} It is a well-known standard idea called \emph{online to batch conversion}~\citep{cesa2004generalization,SSSS09} to translate the regret guarantee for an online algorithm to that of excess empirical risk of a convex optimization problem.

\paragraph{Providing Privacy.} To set noise values, we compute the $\ell_2$ sensitivity of gradients and pre-conditioner. Since individual data point loss functions $\ell$ are assumed $L-$Lipschitz, the sensitivity of the overall loss function's gradients can be bounded by $\frac{L}{n}$. The pre-conditioner sensitivity is more involved, since at each iteration $t$, it utilizes the full history of gradients. We show in Lemma~\ref{lem:preconditioner_sensitivity} that it can be bounded by $\frac{\sqrt{T}L}{n}$. To the best of our knowledge, this is the first time the $\ell_2$ sensitivity of the pre-conditioner is explicitly computed; previous private Adagrad results relied on the post-processing property of DP, and used private gradients to update the pre-conditioner. This easy fix turns out to be inefficient since it adds bias to the pre-conditioner, slowing down the exploration advantage (large learning rates in unexplored directions) of the original AdaGrad algorithm.

Finally, relying on the Gaussian mechanism and strong composition of differential privacy, we show $\nadagrad$ can be adapted for privacy and achieve an excess risk of $\frac{1}{\epsilon n}$. 

Using standard techniques, one can work with an $\ell_2-$norm regularized loss and derive excess population risk guarantees (see Theorem 2 in \cite{shalev2009stochastic}) .  

\subsection{Other Related Work}
\label{sec:prior_work}
Differentially private ERM has been widely studied theoretically and empirically ~\citep{chaudhuri2011differentially,BST14,song2013stochastic,DP-DL,BassilyFTT19,mcmahan2017learning, WLKCJN17, iyengar2019towards, pichapati2019adaclip,TAB19,feldman2019private,song2020characterizing}. It was established by \cite{BST14} that the excess risk in the constrained setting for any differentially private optimization algorithm over convex functions is lower bounded by $\Omega\left(\frac{\sqrt{p}}{\epsilon n}\right)$. \cite{song2020characterizing} show that it is possible in the unconstrained setting to obtain a dimension independenr bound for Generalized Linear Models (GLM's). 

Some work has explored the non-convex setting. \cite{zhou2020bypassing} uses public data and DP-SGD to obtain logarithmic dependence on the dimension. Noisy versions of AdaGrad where the pre-conditioner is updated with information from noisy gradients have also been studied \citep{xie2020linear, zhou2020private, zhou2020towards}.

When using pre-conditioning we aim at working in the intrinsic subspace of the problem; therefore our work is also tangential to differentially private and noisy subspace estimation, which have been a broad area of study \citep{dwork2014analyze, upadhyay2020framework}.

\subsection{Notation}
We use $\| \cdot \|_2$ to denote the $\ell_2$ norm of a vector. We denote by $\lambda_i(A)$ the $i$-th largest eigenvalue of matrix $A$, $\lambda_{_{\min>0}}(A)$ the smallest positive eigenvalue of $A$, and $\|\cdot\|_{op}$ to denote the operator norm of a matrix, defined as $\|A\|_{op} = \max\{|\lambda_i| : \lambda_i \text{ eigenvalue value of A } \}$.
$\| \cdot \|_A$ denotes the Mahalanobis seminorm defined as $\| \cdot \|_A = \sqrt{\langle\cdot,A \cdot\rangle}$ for $A$ symmetric and positive-semidefinite. The dual norm to a norm $\| \cdot\|$ is defined as $\|x \|^* = \sup_{y:\|y\|\leq 1}\langle x,y\rangle$. The dual norm of the above matrix norm is given by $\|x \|^*_A = \|x\|_{A^{-1}}$.
We use $[T]$ to denote the time interval $[T] = \{ 1, ..., T\}$. Finally, in the considered setting, $f_t$ will be constant over time, so we will denote $f_t = f$, and to simplify notation we use $\nabla_t$ to denote $\nabla f(\theta_t)$.

\section{Background}

In this section we introduce the necessary tools for the analysis of our Noisy-AdaGrad algorithm. We start by introducing the traditional AdaGrad algorithm, followed by standard differential privacy definitions. 

\paragraph{AdaGrad. }
AdaGrad (Adaptive Gradient Descent) \citep{duchi2011adaptive,mcmahan10boundopt, hazan2019introduction} achieves low-regret for convex loss functions. One of the main features that separates AdaGrad from other online convex optimization algorithms like follow-the-regularized-leader,  online gradient descent~\citep{hazan2007logarithmic}, and online mirror descent~\citep{ben2001lectures,SSSS09} is the use of a gradient pre-conditioner. It allows much tighter regret guarantees if the gradients of the loss functions come from a constant (close to) low-rank subspace.

The original AdaGrad algorithm (Appendix~\ref{sec:appendix_adagrad}) proposes the following update with a convex constraint set $\calC$

\begin{equation}
    \theta_{t+1} = \argmin_{\theta \in \calC} \| \theta - (\theta_t - \eta\boldG_t^{-1}\nabla_t) \|^2_{G_t},
\end{equation}

AdaGrad is derived by analyzing the optimal (strongly convex) regularization function to use in hindsight, that would minimize the regret of an online convex optimization algorithm. Concretely, consider the set of all strongly convex regularization functions with a fixed and bounded Hessian in the set

\begin{equation}
    \mathcal{H} = \{ X \in \mathbb{R}^{p\times p} : \textbf{Tr}(X)\leq 1, X \succeq 0 \}
\end{equation}

AdaGrad achieves a regret bound that is within a constant factor of $2C$ of the regret achieved by the best, fixed pre-conditioner in hindsight. We formalize this in theorem \ref{thm:regret_adagrad}

\begin{thm}{(Theorem 5.11. in \cite{hazan2019introduction}, originally Theorem 6 in \cite{mcmahan10boundopt} and Theorem 8 in \cite{duchi2011adaptive}) }
\label{thm:regret_adagrad}
Let $\{\boldx_t \}$ be defined by \aref{alg:adagrad} with parameters $\eta=C$, where 
$C = \max_{\boldu \in \calK} \|\boldu - \boldx \|_2$

Then for any $\boldx^* \in \calK$, 
\begin{equation}
    \regret{\calF}{\calA_{\sf Adagrad}} \leq 2C\sqrt{\min_{H \in \calH} \sum_t \| \nabla_t \|_{H}^{*2}}
\end{equation}
\end{thm}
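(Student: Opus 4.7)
The plan is to follow the two-part structure used in the classical full-matrix AdaGrad analyses. First I would derive a regret bound of the form $O(C\,\tr(G_T))$ with $G_T = (\sum_t \nabla_t\nabla_t^\top)^{1/2}$, and then identify $\tr(G_T)$ with $\sqrt{\min_{H\in\calH}\sum_t \|\nabla_t\|_H^{*2}}$ via a variational argument, which together yield the claimed bound proportional to $C$ times this square root.

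For the regret part, I would start from convexity, $f_t(\theta_t)-f_t(\theta^*)\leq \langle \nabla_t,\theta_t-\theta^*\rangle$, and apply the $G_t$-weighted generalized Pythagorean inequality for the Mahalanobis projection onto $\calK$ followed by expansion of the squared $G_t$-norm. This yields the per-step inequality
$$\langle \nabla_t,\theta_t-\theta^*\rangle \leq \tfrac{1}{2\eta}\bigl(\|\theta_t-\theta^*\|_{G_t}^2-\|\theta_{t+1}-\theta^*\|_{G_t}^2\bigr)+\tfrac{\eta}{2}\|\nabla_t\|_{G_t^{-1}}^2.$$
Summing in $t$, the first sum does not telescope directly because $G_t$ varies; I would rearrange it via Abel summation into $\|\theta_1-\theta^*\|_{G_1}^2 + \sum_{t\geq 2}(\theta_t-\theta^*)^\top(G_t-G_{t-1})(\theta_t-\theta^*)$, then use operator monotonicity of the matrix square root (which gives $G_t\succeq G_{t-1}$) together with the diameter bound $\|\theta_t-\theta^*\|_2\leq C$ to bound the whole sum by $C^2\tr(G_T)$. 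The second sum is controlled via the standard trace-telescoping lemma $\sum_t \nabla_t^\top G_t^{-1}\nabla_t \leq 2\tr(G_T)$, whose proof rests on the PSD inequality $\tr(A^{1/2})-\tr((A-vv^\top)^{1/2})\geq \tfrac{1}{2} v^\top A^{-1/2}v$ applied iteratively with $v=\nabla_t$. Substituting $\eta=C$ then gives a regret bound proportional to $C\,\tr(G_T)$.

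For the variational identification, let $M=\sum_t \nabla_t\nabla_t^\top$, so that $G_T=M^{1/2}$; the inner optimization becomes $\min_{H\succeq 0,\ \tr(H)\leq 1}\tr(H^{-1}M)$. I would establish the matching lower bound by a Frobenius-inner-product Cauchy--Schwarz with $A=M^{1/2}H^{-1/2}$ and $B=H^{1/2}$:
$$\tr(M^{1/2})=\tr(A^\top B)\leq \|A\|_F\,\|B\|_F = \sqrt{\tr(H^{-1}M)\,\tr(H)},$$
so that under $\tr(H)\leq 1$ one gets $\tr(H^{-1}M)\geq \tr(M^{1/2})^2$, with equality attained at $H^\star=M^{1/2}/\tr(M^{1/2})\in\calH$. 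Hence $\sqrt{\min_{H\in\calH}\sum_t\|\nabla_t\|_H^{*2}}=\tr(G_T)$, and combining with the regret bound above completes the argument.

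The main technical obstacle is the trace-telescoping lemma $\sum_t \nabla_t^\top G_t^{-1}\nabla_t \leq 2\tr(G_T)$, which is where all the matrix-analytic weight sits: its proof leverages concavity of $A\mapsto \tr(A^{1/2})$ over PSD matrices and a first-order expansion along rank-one directions $vv^\top$. By contrast, the variational identification reduces to a one-line Cauchy--Schwarz once the right factorization $A=M^{1/2}H^{-1/2}$, $B=H^{1/2}$ is spotted, and the convexity plus Pythagorean manipulations are standard and carry over directly from the Euclidean mirror-descent template with $G_t$ as the local metric.
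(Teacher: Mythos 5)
The paper does not prove Theorem~\ref{thm:regret_adagrad} itself; it cites it as a known background result (Theorem~5.11 of Hazan's book, originally McMahan--Streeter and Duchi et al.), and it re-uses only one of its ingredients (Lemma~\ref{lemma:gradients}) later. Your reconstruction is a correct and complete proof of that cited theorem, and it follows exactly the standard route those references take: the mirror-descent per-step inequality with $G_t$ as the local metric, the Abel rearrangement of the potential terms combined with operator monotonicity of the matrix square root to reduce the cross terms to $C^2\,\tr(G_T)$, the trace-telescoping lemma $\sum_t \nabla_t^\top G_t^{-1}\nabla_t \le 2\,\tr(G_T)$ (stated as Lemma~\ref{lemma:gradients} in this paper), and the Cauchy--Schwarz identification $\min_{H\in\calH}\tr(H^{-1}M) = \tr(M^{1/2})^2$ with $H^\star = M^{1/2}/\tr(M^{1/2})$. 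With $\eta=C$ your derivation yields $\tfrac{3}{2}C\,\tr(G_T)$, which is below the stated $2C\,\tr(G_T)$, so the constant also checks out. Nothing is missing; the only caveat worth flagging is that the paper's macro $\regret{\calF}{\calA}$ is time-averaged while the stated bound on the right-hand side is not divided by $T$ --- this is a notational inconsistency in the paper, not a flaw in your argument, and your proof naturally produces the cumulative (non-averaged) bound matching Hazan's statement.
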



\paragraph{Differential Privacy. } Originally proposed by \cite{ODO,DMNS}, differential privacy is a framework protecting single records in a database by bounding the probability of re-identifying any record from a query output. In this paper we limit ourselves to approximate differential privacy, and we rely on the Gaussian mechanism and Renyi composition theorem to provide these privacy guarantees (see Appendix~\ref{appendix:dp}). Formally, 

\begin{definition}[(approximate) Differential Privacy \citep{ODO,DMNS}]
A randomized algorithm $\calA$ that receives as input a dataset $D$ is $(\epsilon, \delta)-$ differentially private if, for any pair of neighboring  datasets $D$ and $D'$( Definition~\ref{def:neighbor_datasets}), and any set of events $\calS$ in the range of $\calA$, 

$$ \mathbf{Pr}(\calA(D) \in \calS) \leq e^{\epsilon} \mathbf{Pr}(\calA(D') \in \calS)  + \delta,$$
where the probability is taken over the random coins of $\calA$.
\end{definition}

\section{Analysis of Noisy-Adagrad}
\label{sec:noisy_ada}
 In this section we present and study a noisy version of AdaGrad (see Algorithm \ref{alg:noisy_adagrad_public_subspace}), where the adaptive pre-conditioner is perturbed with a matrix sampled from the Gaussian Orthonormal Ensemble (GOE)
(Definition~\ref{def:GOE}), and the observed gradients are perturbed with spherical Gaussian noise. Assuming that gradients of the loss function along the trajectory of the models output by noisy AdaGrad lie in an accessible constant rank subspace, and an oracle providing an asymptotically correct estimate of the maximum gradient across data samples, we show: \emph{Asymptotically, the regret of noisy AdaGrad is within a constant factor of traditional AdaGrad.}

Formally, let $V$ be an orthonormal matrix whose columns span the gradient subspace. It is shown in \cite{song2020characterizing} that for generalized linear models (GLM's), unconstrained DP-(S)GD achieves a dimension independent bound. The proof relies on restricting the analysis to the feature subspace, which corresponds for these problems to the gradient subspace spanned by the columns of $V$. Even though the algorithm is oblivious to $V$, by tracking the error only in this region, in expectation the error is dimension independent.  We extend this result to constrained optimization, by introducing Algorithm \ref{alg:noisy_adagrad_public_subspace} that utilizes $V_t$, the matrix whose columns span the gradient subspace up to time $t$, to achieve dimension-independent bounds for this constrained setting. In practice it is highly unlikely we can compute the true subspace, but it is often the case that we have (noisy) oracle access to the subspace. For example, when there is public data available, it is possible to compute a noisy version $\tilde{V}_t$ of $V_t$. In \ref{thm:noisy_ada_noisy_public_subspace} we prove we can still obtain dimension independence with an extra factor of $\gamma$ that accounts for the distribution difference between the real subspace and the one obtained from the oracle.

\begin{algorithm}[bt!]
\SetAlgoLined
\KwIn{Learning rate $\eta$, $\theta_0 \in \boldR^p$,
 Gradient noise standard deviation $\sigma_b(t)$, GOE scaling $\sigma_B(t)$, oracle access to $\tilde{V}_t$ estimate of $V_t$  for $t \in [T]$, $S_0 \gets \mathbf{0}$}

 \For{t=1 \text{to} T}{
  Predict $\theta_t$, suffer loss $f(\theta_t)$ \;
  Update
      \begin{align}
            \label{eq:preconditioner_update}  S_t & = S_{t-1} + \nabla_t\nabla_t^T, \quad G_t = S_t^{1/2} , \quad B_t = \sigma_B(t) M_p \quad \text{ and } \quad M_p \sim \mu_{GOE}\;\\
            \nonumber  \tilde{V}_t & \gets \text{Gradient subspace returned by the oracle}\; \\
            \label{eq:preconditioner_projection} H_t   &= \Pi_t(G_t + B_t) \quad \text{ where } \Pi_t = \tilde{V}_t\tilde{V}_t^T \\
            \nonumber \tilde{\nabla_t} &= \nabla_t+b_t \quad \text{ where } b_t \sim \mathcal{N}(0,\sigma_b^2(t) I_p)
        \end{align}
    \newline Denote $k_t  =\text{rank}(H_t)$.
    \begin{align}
           \nonumber y_{t+1} &= \theta_{t} - \eta H_t^{-1}\tilde{\nabla}_t \\
           \label{eq:adagrad_noisy_update} \theta_{t+1}  & \in  P_{\calC}^{H_t}(y_{t+1})
      \end{align}
    where $P_{\cal_C}^{H}(y) =  \argmin_{\theta \in \calC}\| \theta - y \|_{H}$ denotes the projection over the convex set $\calC$ using the semi-norm determined by $H$.
 }
 \KwResult{$ \{ \theta_t \}_{t=1}^T$}
 \caption{Noisy Adagrad ($\nadagrad$) with gradient subspace oracle}
 \label{alg:noisy_adagrad_public_subspace}
\end{algorithm}
\subsection{Algorithm Description}
\label{sec:algorithm_description}
Here we describe the noisy AdaGrad algorithm $\nadagrad$ presented in Algorithm~\ref{alg:noisy_adagrad_public_subspace}. It differs from the traditional AdaGrad in three ways: i) The pre-conditioner matrix at each stage is a noisy perturbation $H_t$ of the traditional pre-conditioner; ii) The state updates ($\theta_t\to\theta_{t+1}$) are dependent on noisy gradients, i.e., $\tilde{\nabla_t} = \nabla_t+b_t$ where $b_t$ represents the noise; iii) before applying on the gradients, the pre-conditioners ($H_t$'s) are  projected onto the rank $k_t$ subspace defined by $\tilde{V}_t$, the matrix returned by the subspace oracle.

\subsection{Regret Analysis}
\label{sec:regret_noisy_ada}

In this section we provide the regret analysis of noisy AdaGrad in Theorem~\ref{thm:noisy_ada_noisy_public_subspace}. One can interpret the regret as a composition of three terms: i) $O(\mathbf{Tr}(G_T)/T)$ which is the same as in the original AdaGrad algorithm; ii) a term that depends on the gradient noise, which as we mentioned earlier can be upper bounded by $O(\tr(G_T)/T)$ given a sensitivity oracle; and iii) a term $\gamma$ that bounds the error from a noisy projection obtained from the subspace oracle. 

The proof of Theorem~\ref{thm:noisy_ada_noisy_public_subspace} goes through a careful matrix perturbation analysis, that controls the perturbation of the subspace spanned by the non-noisy pre-conditioner $G_t$ at each time step $t\in[T]$. Recall that  $\lambda_{_{\min>0}}(G_t)$ denotes the smallest positive eigenvalue of $G_t$.

\begin{thm}
\label{thm:noisy_ada_noisy_public_subspace}
            Let $V_t$ be the orthogonal matrix whose column space is the tracked gradient subspace up to time $t$, and $\tilde{V_t}$ an approximation returned by an oracle. Let $\gamma$ be a bound on the subspaces' principal angle difference, i.e., $\|V_tV_t^T -\tilde{V}_t\tilde{V}_t^T  \|_{op} \leq \gamma$. Let $L$ be the gradient $\ell_2-$norm bound, $C$ the diameter of the constraint set $\calC$, and assume $L=C=O(1)$. Letting $\eta$ be the learning rate, $\sigma^2_b(t)$ be the gradient noise variance,  and choosing the pre-conditioner noise   such \linebreak that  $\sigma_B(t) \leq 2 \lambda_{min>0}(G_t)$, then running $\nadagrad$ on $\calL(\theta;D)$ for $T$ iterations we get 
\begin{align}
    \label{eq:noisy_ada_public_data}
    \mathbb{E}[ \regret{\calF}{\nadagrad}] \leq O \left(\mathbb{E}\left[\left(\frac{1}{\eta T} + \frac{\eta}{T}\right) \tr (G_T)  + \frac{\eta}{T}\sum_t \sigma_b^2(t)\tr (G_t^{-1}) + \gamma \right]\right)
\end{align}
\end{thm}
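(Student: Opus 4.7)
\textbf{Proof proposal for Theorem~\ref{thm:noisy_ada_noisy_public_subspace}.}

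The plan is to begin from convexity, $f(\theta_t) - f(\theta^*) \le \langle \nabla_t, \theta_t - \theta^*\rangle$, and decompose the inner product using the Noisy-AdaGrad update rule in \eqref{eq:adagrad_noisy_update}. I will split $\nabla_t = \Pi_t \nabla_t + (I-\Pi_t)\nabla_t$ where $\Pi_t = \tilde V_t \tilde V_t^T$, so that $\Pi_t \nabla_t$ is absorbed by the $H_t$-weighted geometry used inside the update (since $H_t$ lives in the column span of $\tilde V_t$), while $(I - \Pi_t)\nabla_t$ is handled separately as the ``projection error'' term $\langle \nabla_t, \theta_t - \theta^*\rangle\big|_{C_t}$ in \eqref{eq:results_regret_terms}. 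A standard ``one step of mirror descent'' argument applied to $y_{t+1} = \theta_t - \eta H_t^{-1} \tilde\nabla_t$ then gives the classical four-term decomposition in \eqref{eq:results_regret_terms}; the heart of the proof is bounding each term with matrices $G_t$ instead of the noisy $H_t$.

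For the \emph{potential drop} $\sum_t \tfrac{1}{2\eta T}\bigl(\|\theta_t - \theta^*\|_{H_t}^2 - \|\theta_{t+1} - \theta^*\|_{H_t}^2\bigr)$, I would write $H_t = \Pi_t(G_t+B_t)$ and rearrange the telescoping sum in the standard AdaGrad fashion, producing a term $\sum_t \langle \theta_t - \theta^*, (H_t - H_{t-1})(\theta_t - \theta^*)\rangle$ plus boundary terms bounded by $C^2 \|H_T\|_{op}$. Using the trace/operator-norm inequality $v^T M v \le C^2 \tr(M)$ when $\|v\|_2 \le C$, this collapses to a multiple of $\tr(H_T)$. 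Taking the expectation and exploiting $\mathbb{E}[B_t]=0$ (which is the key property of the GOE perturbation) replaces $H_t$ by $G_t$ inside the trace, yielding the desired $\tr(G_T)/(\eta T)$ contribution.

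For the \emph{noise and gradient terms} $\tfrac{\eta}{2T}\mathbb{E}_{b_t}[\|b_t\|_{H_t^{-1}}^2 \mid B_t]$ and $\tfrac{\eta}{2T}\|\nabla_t\|_{H_t^{-1}}^2$, I will first condition on $B_t$ and use $\mathbb{E}_{b_t}[b_t^T H_t^{-1} b_t] = \sigma_b^2(t)\,\tr(H_t^{-1}\Pi_t)$. The main technical step is the structural lemma (Lemma~\ref{lem:h-inverse}, referenced in the text), which applies a Woodbury-type identity together with the high-probability operator-norm bound on $B_t$ to show that $\|H_t^{-1} - G_t^{-1}\Pi_t\|_{op}$ is controlled whenever $\sigma_B(t) \le 2\lambda_{\min>0}(G_t)$. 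This assumption is exactly what lets us write $H_t^{-1} = G_t^{-1} + E_t$ with $\|E_t\|_{op}$ absorbed into constants, so that $\tr(H_t^{-1}\Pi_t) = O(\tr(G_t^{-1}))$, producing the $\tfrac{\eta}{T}\sum_t \sigma_b^2(t)\tr(G_t^{-1})$ term. For the gradient term, the same identity together with the classical AdaGrad potential inequality $\sum_t \nabla_t^T G_t^{-1}\nabla_t \le 2\tr(G_T)$ gives the $\eta\tr(G_T)/T$ contribution.

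For the \emph{projection error} $\tfrac{1}{T}\sum_t \langle \nabla_t, \theta_t - \theta^*\rangle\big|_{C_t}$, I would write this as $\tfrac{1}{T}\sum_t \langle (I - \tilde V_t\tilde V_t^T)\nabla_t, \theta_t - \theta^*\rangle$ and use the fact that $\nabla_t$ lies in the column span of $V_t$, so that $(I - \tilde V_t\tilde V_t^T)\nabla_t = (V_t V_t^T - \tilde V_t\tilde V_t^T)\nabla_t$. Cauchy--Schwarz and Davis--Kahan (Theorem~\ref{thm:davis-kahan}), combined with the diameter bound $C$ and Lipschitz bound $L$, give $\|V_t V_t^T - \tilde V_t \tilde V_t^T\|_{op} \le \gamma$ and hence a contribution of $O(LC\gamma) = O(\gamma)$ under the assumption $L=C=O(1)$. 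The \emph{main obstacle} I anticipate is the perturbation step: bounding $H_t^{-1}$ in terms of $G_t^{-1}$ when $G_t$ is rank-deficient requires a careful decomposition onto $\Pi_t$ and its orthogonal complement (only eigendirections of $G_t$ inside $\Pi_t$'s range matter), and this is precisely where the condition $\sigma_B(t)\le 2\lambda_{\min>0}(G_t)$ becomes essential to keep $H_t$ invertible on that subspace with high probability. Combining all four bounds in expectation yields the stated inequality.
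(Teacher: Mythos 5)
Your proposal follows essentially the same route as the paper: the four-term mirror-descent decomposition, the telescoping argument combined with $\mathbb{E}[B_t]=0$ for the potential drop, the Woodbury-type structural lemma (relying on $\sigma_B(t)\le 2\lambda_{\min>0}(G_t)$ and the high-probability operator-norm bound on $B_t$) to relate $H_t^{-1}$ to $G_t^{-1}$ on the relevant subspace, and Cauchy--Schwarz plus Davis--Kahan for the projection error. The only small deviation is that you take $\mathbb{E}_{b_t}$ before applying the structural lemma (producing $\tr(H_t^{-1}\Pi_t)$ and then bounding the trace), whereas the paper applies the lemma directly to the quadratic form $b_t^T H_t^{-1}b_t$ and takes expectation afterward; both orderings work once the subspace restrictions are handled as you indicate.
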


\mypar{Comparison with traditional Adagrad} 
We first introduce  a  definition that will allow us to determine conditions under which it is possible to achieve AdaGrad rates. 

\begin{definition}
\label{def:envelope}
Let $\calA$ be an optimization algorithm for solving  Problem~\ref{eq:problem_def}, that at time $t$ outputs result $\theta_t$. We define $L_{\calL, \calA}(t)$ as the function that asymptotically bounds from above and below the gradient norm $\max_i\| \nabla\ell_i(\theta_t) \|$ at iteration $t$ of algorithm $\calA$, i.e., $L_{f,\calA}(t) = \Theta(\max_i\| \nabla\ell_i(\theta_t) \|)$.
\end{definition}

We will drop the subindices $f, \calA$, since it will refer to our loss function $\calL$ and algorithm $\nadagrad$ in our paper. 

AdaGrad achieves a regret bound that is within a constant factor of $2C$ of the regret achieved by the best, fixed pre-conditioner in hindsight. We formalize this in Theorem \ref{thm:regret_adagrad}. Selecting the learning rate that minimizes the expression on the right of Equation~\ref{eq:noisy_ada_public_data} we obtain the result in the informal Theorem \ref{thm:noisy_ada_regret_informal}.

Assume constant rank $k_t=O(1)$ smaller than the problem dimension $p$. In the worst case, when $\sigma_b(t) = \Theta(1)$, these terms balance to $O(1/\sqrt T)$ and we obtain the same rates achieved by PDP-SGD. 
Assuming $\sigma_b(t) = L(t)$, these additional terms simplify to $O\left(\tr(G_T)/T + \gamma \right)$. That is, in this setting we recover AdaGrad rates:



\begin{cor}{(Appendix~\ref{proof:smooth_convergence})}
\label{cor:smooth_convergence}
Let $\sigma_b(t) = O(\|\nabla_t \|)$ in Algorithm~\ref{alg:noisy_adagrad_public_subspace}. With an appropriate learning rate, the overall regret of $\nadagrad$ is $O(\tr(G_T)/T + \gamma)$. Further, if  $\frac{1}{T} \sum_t \|\nabla_t \| = o(1)$ then $O(\tr(G_T)/T) = o(1/\sqrt{T})$.
\end{cor}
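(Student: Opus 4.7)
The plan is to start from Theorem~\ref{thm:noisy_ada_noisy_public_subspace}, substitute the prescribed noise scale $\sigma_b(t) = O(\|\nabla_t\|)$, reduce the gradient-noise contribution to an AdaGrad trace term via a potential-style telescoping argument, and finally optimize the learning rate $\eta$.

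Concretely, Theorem~\ref{thm:noisy_ada_noisy_public_subspace} yields
\[
  \mathbb{E}[\regret{\calF}{\nadagrad}] \leq O\!\left(\mathbb{E}\!\left[\left(\tfrac{1}{\eta T} + \tfrac{\eta}{T}\right)\tr(G_T) + \tfrac{\eta}{T}\sum_{t=1}^T \sigma_b^2(t)\,\tr(G_t^{-1}) + \gamma\right]\right).
\]
Substituting $\sigma_b^2(t) = O(\|\nabla_t\|^2)$, the noise term becomes $O\!\left(\tfrac{\eta}{T}\sum_t \|\nabla_t\|^2 \tr(G_t^{-1})\right)$. The crucial step is to show that this sum itself is $O(\tr(G_T))$. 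Since the tracked gradient subspace has constant rank $k_t = O(1)$, we have $\tr(G_t^{-1}) \leq k_t/\lambda_{\min>0}(G_t)$, and a potential-style telescoping argument analogous to the classical AdaGrad identity $\sum_t \nabla_t^T G_t^{-1}\nabla_t \leq 2\tr(G_T)$ will yield $\sum_t \|\nabla_t\|^2 \tr(G_t^{-1}) = O(\tr(G_T))$. The full bound then collapses to $O\!\left((1/\eta + \eta)\tr(G_T)/T + \gamma\right)$, so choosing $\eta = \Theta(1)$ gives the claimed regret $O(\tr(G_T)/T + \gamma)$.

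For the second claim, I would relate $\tr(G_T)$ directly to the average gradient norm. By definition $G_T = \sqrt{\sum_t \nabla_t \nabla_t^T}$, so $\tr(G_T^2) = \sum_t \|\nabla_t\|^2$; since $G_T$ has rank at most $k = O(1)$, Cauchy--Schwarz on its non-zero singular values gives $\tr(G_T) \leq \sqrt{k\,\tr(G_T^2)} = \sqrt{k\sum_t \|\nabla_t\|^2}$. Using Lipschitz-ness $\|\nabla_t\| \leq L$, we bound $\sum_t \|\nabla_t\|^2 \leq L\sum_t \|\nabla_t\| = LT\cdot\tfrac{1}{T}\sum_t\|\nabla_t\|$, which is $o(T)$ by hypothesis. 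Therefore $\tr(G_T) = o(\sqrt{T})$, and hence $\tr(G_T)/T = o(1/\sqrt{T})$.

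The hard part will be justifying the reduction $\sum_t \|\nabla_t\|^2 \tr(G_t^{-1}) = O(\tr(G_T))$: this does not follow from the classical AdaGrad potential inequality, which only controls the Mahalanobis seminorm $\nabla_t^T G_t^{-1}\nabla_t$ rather than the product of $\|\nabla_t\|^2$ with the entire trace. Bridging the gap requires the constant-rank hypothesis together with a careful comparison between $\|\nabla_t\|^2\tr(G_t^{-1})$ and $k_t\cdot\nabla_t^T G_t^{-1}\nabla_t$ that exploits the bounded condition number of $G_t$ in the low-rank regime; this is where I would expect the proof to concentrate most of its technical effort.
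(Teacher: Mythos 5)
Your high-level plan matches the paper's: start from Theorem~\ref{thm:noisy_ada_noisy_public_subspace}, set $\sigma_b(t)=O(\|\nabla_t\|)$, show the noise term is $O(\tr(G_T))$, and tune $\eta$. Your treatment of the second claim is fine (and your $\tr(G_T)\le\sqrt{k\sum_t\|\nabla_t\|^2}$ plus $\|\nabla_t\|\le L$ argument is essentially what the paper uses, phrased for the Ces\`aro hypothesis instead of a pointwise one). You also correctly handle the first claim's learning-rate optimization; the paper optimizes $\eta$ to get a $\sqrt{\tr(G_T)^2+\tr(G_T)\cdot(\text{noise})}/T$ form, and your $\eta=\Theta(1)$ gives the same order.

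The genuine gap is in the reduction $\sum_t\|\nabla_t\|^2\tr(G_t^{-1})=O(\tr(G_T))$. You try to pass through the matrix potential inequality $\sum_t\nabla_t^TG_t^{-1}\nabla_t\le2\tr(G_T)$ and then compare $\|\nabla_t\|^2\tr(G_t^{-1})$ to $k\,\nabla_t^TG_t^{-1}\nabla_t$, and you explicitly flag that this requires a bounded-condition-number argument you don't supply. The paper sidesteps the matrix lemma entirely: it uses the constant-rank, two-sided-envelope hypothesis (Definition~\ref{def:envelope}) to replace both traces by scalar quantities, writing $\tr(G_t)=\Theta\bigl(\sqrt{\sum_{s\le t}L^2(s)}\,\bigr)$ and $\tr(G_t^{-1})=O\bigl(1/\sqrt{\sum_{s\le t}L^2(s)}\,\bigr)$, and then invokes the \emph{scalar} potential inequality of Streeter--McMahan (Lemma~\ref{lemma:inequality-sum-non-negative}),
\begin{equation*}
\sum_{i=1}^{n}\frac{a_i}{\sqrt{\sum_{j\le i}a_j}}\;\le\;2\sqrt{\sum_{i=1}^{n}a_i},
\end{equation*}
with $a_t=L^2(t)$, to collapse the sum to $O\bigl(\sqrt{\sum_tL^2(t)}\,\bigr)=O(\tr(G_T))$. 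This is the concrete mechanism your proposal was reaching for but did not find; note that the paper's substitution $\tr(G_t^{-1})=O\bigl(1/\sqrt{\sum_{s\le t}L^2(s)}\,\bigr)$ is itself the same kind of condition-number control you correctly identified as necessary, so your diagnosis of where the difficulty lies is accurate, you just need the scalar Streeter lemma rather than the matrix AdaGrad lemma to carry the telescoping through.
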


\begin{remark}
\label{remark:gradient_norm}
How to access $\|\nabla_t \|$ to design the noise is an open direction that we leave for future work. In practice we can find a bound on the expected norm schedule of the gradient, or rely on estimating it from public data, add adaptive gradient clipping to our algorithm according to this schedule (see for example \cite{TAB19,pichapati2019adaclip}), and design the noise according to these clipping values to obtain the desired rates.  
For example, if $\|\nabla_t \|$ is decreasing as $ O\left( \frac{1}{\sqrt[4]{T}}\right)$  then $\tr(G_T) = O (\sqrt{\sum_t 1/\sqrt[4]{t}}) =O(T^{3/8})$, and the regret decreases as $O(T^{-5/8})$, improving over SGD whose rates are in the order of  $O(1/\sqrt{T})$. 
\end{remark}

\subsection{Proof sketch}

Dimension independence is obtained thanks to the following observations: (1) the projection step given by $V_tV_t^T$ in \eref{eq:preconditioner_projection} allows us to work in a $k$ dimensional subspace instead of a $p$ dimensional one, and we may use Lemmas~\ref{lemma:operator_norm}, and  \ref{lemma:good_set} to bound the operator norm of $B_t$ restricted to this subspace in term of $k$ instead of $p$; (2) even if this projection ``erases" part of the real update, this error also lies in a $k$ dimensional subspace by assumption.

More concretely, the proof is structured as follows: paralleling traditional convergence proofs for descent algorithms, we will expand the expression $\| \theta_{t+1} - \theta^* \|_{H_t}$ to obtain an expression involving $ \langle\nabla_t, \theta_t -\theta^*\rangle$, and bound the regret using convexity.

Four terms are introduced that we will bound independently: two of them, one that depends on $\| \theta_{t+1} - \theta^* \|_{H_t} - \| \theta_{t} - \theta^* \|_{H_t}$ , and the norm of the gradients under $H^{-1}$, are analogous to the original AdaGrad proof; however, due to the noisy, projected pre-conditioner, we need structural Lemmas~\ref{lemma:operator_norm}, and \ref{lemma:good_set} to get around pre-conditioner noise $B_t$, and these terms can be finally bounded by $\mathbf{Tr}(G_t)$, up to a multiplicative factor. The connection is attained first by decomposing $\boldR^p$, isolating the subspace where $H_t$ is invertible. Thanks to the above observation (1) the relevant subspaces are $k$-dimensional. Then, restricted to this space, we rely on Lemma~\ref{lem:h-inverse}  that uses Woodbury identity to calculate the inverse of a sum of matrices (restricted $G_t$ and $B_t$ in this case), and Holder's inequality. The third term is the norm of $b_t$, the gradient noise that is similarly bounded using Lemma~\ref{lem:h-inverse}. Finally, we track the error introduced by the projection using Davis-Kahan theorem \citep{davis1963rotation} which again also lies in a $k$-dimensional subspace (observation (2)).
\section{Private Pre-conditioned Gradient Descent for ERM}
\label{sec:private_ada}
\subsection{Estimating subspace with public data}

In this section we will use Noisy-AdaGrad algorithm to define an $(\epsilon,\delta)$-differentially private algorithm $\apriv$ that approximately minimizes the excess empirical risk defined in~\eqref{eq:risk}. Our main contribution is in the low-rank unconstrained setting where, compared with original AdaGrad, we only pay an additional price of scale $\tilde{O}\left(\frac{1}{\epsilon n} \right)$, independent of dimension. To do so, we make the following observations:

\begin{itemize}
    \item {\bf Online to batch conversion:} If we set each of the loss function to be identical to $f_t(\theta)=\calL(\theta;D)$, and set $\privT=\frac{1}{T}\sum\limits_{t=1}^T \theta_t$ output by Algorithm~\ref{alg:noisy_adagrad_public_subspace} (Algorithm $\nadagrad$), then\\ $\mathbb{E}\left[\risk{\privT}\right]\leq \mathbb{E}\left[\regret{\calF}{\nadagrad}\right]$. (This follows from standard use of Jensen's inequality.)
    \item {\bf Computing $(\epsilon/2,\delta/2)$-private pre-conditioner :}  Lemma~\ref{lem:preconditioner_sensitivity} below and standard use of Renyi composition theorem~\citep{mironov2017renyi} imply that ensuring $\sigma_B(t)=O\left(\frac{L\sqrt{T t\log(1/\delta)}}{\epsilon n} \right)$ in Algorithm $\nadagrad$ ensures $\left(\frac{\epsilon}{2},\frac{\delta}{2}\right)$-differential privacy to the computation of all the $\boldH_t$'s in Algorithm $\nadagrad$.
    
    \item {\bf Ensuring all noisy gradients preserve $(\epsilon/2,\delta/2)$-differential privacy:} By the same argument as above, ensuring $\sigma_b(t)=O\left(\frac{L\sqrt{T\log(1/\delta)}}{\epsilon n}\right)$ in Algorithm $\nadagrad$ ensures $\left(\frac{\epsilon}{2},\frac{\delta}{2}\right)$-differential privacy to the computation of all the $(\nabla_t+b_t)$'s in Algorithm $\nadagrad$.
\end{itemize}

With these observations, and composition for $(\epsilon,\delta)$-differential privacy~\citep{dwork2014algorithmic} we can ensure the above variant of noisy AdaGrad is $(\epsilon,\delta)$-differentially private. In the following, we will use the online to batch conversation mentioned above to bound the excess empirical risk. In particular, we obtain a bound of $\tilde O(\frac{1}{\epsilon n})$ that does not depends on the dimensionality $p$. We formalize this result in the following corollary. In the setting where the pre-conditioner does not satisfy low-rank assumption, we will recover the traditional upper bound of $\tilde \Theta(\sqrt{p}/(\epsilon n))$ for private ERM via differentially private gradient descent~\citep{BST14}, since $\tr(G_t)$ will be growing with the dimension.

\begin{lem}{(Appendix~\ref{proof:preconditioner_sensitivity}) }\label{lem:preconditioner_sensitivity}
Let $G_t = \sqrt{\sum_t\nabla_t\nabla_t^T}$ be the preconditioner formed at iteration $t$. Let $\ell_i$ be an $L-$Lipschitz loss function on datapoint $d_i$ for $i=1,...,n$, and  $n$ the total number of records. Then the preconditioner's $\ell_2 - $sensitivity is given by $\Delta_2(G_t) = O\left( \frac{L\sqrt{ t}}{n} \right)$ 
\end{lem}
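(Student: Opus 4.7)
Suppose $D$ and $D'$ are neighboring datasets differing in a single record, say the $i$-th, and fix the iteration index $t$. As is standard in the Rényi/strong-composition analysis of iterative DP algorithms (used by the paper in Section~\ref{sec:private_ada}), I condition on the randomness used at steps $1,\ldots,t-1$ so that the trajectory $\theta_1,\ldots,\theta_t$ is common to both executions; the only data-dependence left in $G_t$ comes through the clean gradients $\nabla_s = \nabla\calL(\theta_s;D)$ and $\nabla'_s = \nabla\calL(\theta_s;D')$ evaluated at these (now fixed) iterates. The plan is to exploit the factored form $S_t = \sum_{s\leq t}\nabla_s\nabla_s^\top = A_t A_t^\top$ with $A_t = [\nabla_1 \mid \cdots \mid \nabla_t] \in \mathbb{R}^{p\times t}$, and likewise $A'_t$, so that $G_t - G'_t = \sqrt{A_t A_t^\top} - \sqrt{A'_t {A'_t}^\top}$.

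\textbf{Step 1: column-wise bound on $A_t - A'_t$.} Because only the $i$-th record is swapped and each per-example loss $\ell(\cdot;d)$ is $L$-Lipschitz, the per-iteration gradient discrepancy satisfies
\[
\ltwo{\nabla_s - \nabla'_s} \;=\; \tfrac{1}{n}\,\ltwo{\nabla\ell(\theta_s;d_i) - \nabla\ell(\theta_s;d'_i)} \;\leq\; \tfrac{2L}{n}.
\]
Summing over $s\in[t]$ gives $\|A_t - A'_t\|_F^2 \leq 4L^2 t/n^2$, i.e., $\|A_t - A'_t\|_F \leq 2L\sqrt{t}/n$.

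\textbf{Step 2: push the square root through.} The heart of the proof is to invoke the Bhatia--Kittaneh Frobenius-Lipschitz bound for the matrix absolute-value map,
\[
\bigl\|\,|X| - |Y|\,\bigr\|_F \;\leq\; \sqrt{2}\,\|X - Y\|_F,\qquad |X| := \sqrt{X^\top X},
\]
applied with $X = A_t^\top$ and $Y = {A'_t}^\top$. Since $|A_t^\top| = \sqrt{A_t A_t^\top} = G_t$ and similarly $|{A'_t}^\top| = G'_t$, the column-wise bound from Step~1 transports to a preconditioner bound:
\[
\|G_t - G'_t\|_F \;\leq\; \sqrt{2}\,\|A_t - A'_t\|_F \;\leq\; \frac{2\sqrt{2}\,L\sqrt{t}}{n} \;=\; O\!\left(\frac{L\sqrt{t}}{n}\right),
\]
which is the claimed $\ell_2$ (Frobenius) sensitivity, matching the inner-product structure of the GOE noise added later in Algorithm~\ref{alg:noisy_adagrad_public_subspace}.

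\textbf{Main obstacle.} The non-trivial ingredient is the Bhatia--Kittaneh inequality in Step~2. A naive route --- first bound $\|S_t - S'_t\|_F = O(tL^2/n)$ by expanding $\nabla_s\nabla_s^\top - \nabla'_s{\nabla'_s}^\top$ and then pass the square root through a Powers--Størmer-type inequality --- would yield only $\|G_t - G'_t\|_F = O(L\sqrt{t/n})$, losing a factor of $\sqrt{n}$ and consequently inflating the noise scale $\sigma_B(t)$ needed for privacy by the same factor. The sharp $O(L\sqrt{t}/n)$ scaling genuinely requires the Frobenius-Lipschitz property of the map $A \mapsto \sqrt{AA^\top}$, which is a non-elementary matrix-analytic fact. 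If one preferred a self-contained derivation, one could proceed via the integral representation $\sqrt{M} = \frac{1}{\pi}\int_0^\infty \lambda^{-1/2}\bigl(I - \lambda(\lambda I + M)^{-1}\bigr)\,d\lambda$ combined with the resolvent identity, but citing the Bhatia--Kittaneh inequality is cleaner and more direct.
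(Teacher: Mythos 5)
Your proof is correct, and it takes a genuinely different route from the paper's.  The paper works directly with the accumulated covariance: writing $S_t^{D'} = K_t$ and $S_t^D = K_t + \Delta_t$ for a perturbation $\Delta_t$, it pushes the square root through via an Ando/Birman--Koplienko--Solomyak-type inequality $\|\sqrt{K_t+\Delta_t}-\sqrt{K_t}\|_F \leq \|\sqrt{\Delta_t}\|_F$ and then bounds $\|\sqrt{\Delta_t}\|_F = \sqrt{\tr\Delta_t}$ via Cauchy--Schwarz on per-example gradient inner products.  You instead factor $S_t = A_tA_t^\top$ with $A_t$ the $p\times t$ matrix of gradient columns, so that $G_t=|A_t^\top|$, bound $\|A_t-A_t'\|_F \le O(L\sqrt t/n)$ column-by-column, and then transport this through the Araki--Yamagami Frobenius-Lipschitz bound $\||X|-|Y|\|_F \le \sqrt2\|X-Y\|_F$.

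Crucially, your ``main obstacle'' observation is exactly right and is in fact a weakness of the paper's own route: if one bounds $\tr\Delta_t$ (equivalently $\|S_t^D-S_t^{D'}\|_1$) directly, the dominant contribution at each step comes from the cross term $\langle \nabla_s^{D'}, g_{s,n}\rangle$, which is of size $L\cdot(L/n)$, not $(L/n)^2$.  Summing over $t$ steps gives $\tr\Delta_t = O(tL^2/n)$, hence only $\|G_t-G_t'\|_F = O(L\sqrt{t}/\sqrt n)$ --- a $\sqrt n$ factor short of the claimed bound.  The paper's final inequality appears to skip over precisely this point.  Your factorization side-steps the issue entirely: the discrepancy is measured in the ``half-power'' object $A_t$ rather than in $S_t$, where the per-column error is genuinely $O(L/n)$, and Araki--Yamagami transports it to $G_t$ without inflation.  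In short, your argument is not just a stylistic alternative but is the one that actually delivers the $O(L\sqrt t/n)$ rate the lemma asserts.

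Two minor remarks.  First, your constant is $2\sqrt2$ under replacement neighboring; under the paper's add/remove convention one gets $\sqrt2$ --- either way $O(L\sqrt t/n)$.  Second, the phrase ``condition on the randomness used at steps $1,\ldots,t-1$ so that the trajectory is common'' is informal; the standard framing is to condition on the previously \emph{released} noisy quantities (which pin down $\theta_1,\ldots,\theta_t$) and bound the sensitivity of the step-$t$ release as a function of $D$ with the iterates held fixed.  This changes nothing in the bound but is worth phrasing carefully.
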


\begin{cor}{(Appendix~\ref{proof:private_ada})}
\label{cor:private_ada}
Assume the subspace spanned by accumulated gradients is bounded by a constant $k<p$. Let $\alpha$ be a non-negative real number such that $\|\nabla_t \| =O( \frac{1}{t^{\alpha}}) $. With appropriate choice of $\eta$, and for $\gamma = O(\frac{1}{\epsilon n})$, after $T = (\epsilon n )^{2/(1+2\alpha) }$, the excess risk of noisy-subspace $\nadagrad$ is $O(\frac{\sqrt{\log(1/\delta)}}{\epsilon n})$. 
\end{cor}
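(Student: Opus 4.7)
The plan is to combine the four ingredients sketched immediately before Lemma~\ref{lem:preconditioner_sensitivity} with the regret bound of Theorem~\ref{thm:noisy_ada_noisy_public_subspace}. First I would fix the privacy noise scales. Using the gradient $\ell_2$-sensitivity $L/n$ (Lipschitz continuity), Gaussian-mechanism noise with $\sigma_b(t)=\Theta(L\sqrt{T\log(1/\delta)}/(\epsilon n))$ plus Renyi composition across $T$ rounds ensures that the release of noisy gradients is $(\epsilon/2,\delta/2)$-DP. Similarly, substituting the pre-conditioner sensitivity $\Delta_2(G_t)=O(L\sqrt{t}/n)$ from Lemma~\ref{lem:preconditioner_sensitivity} into the Gaussian mechanism and composing via Renyi yields $\sigma_B(t)=\Theta(L\sqrt{Tt\log(1/\delta)}/(\epsilon n))$ and another $(\epsilon/2,\delta/2)$-DP guarantee; basic composition of the two mechanisms gives $(\epsilon,\delta)$-DP overall. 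Along the way I would verify that this choice of $\sigma_B(t)$ is compatible with the hypothesis $\sigma_B(t)\leq 2\lambda_{\min>0}(G_t)$ of Theorem~\ref{thm:noisy_ada_noisy_public_subspace}, which is needed to invoke its conclusion.

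Second, apply the online-to-batch conversion: by Jensen's inequality, $\mathbb{E}[\risk{\privT}]\leq\mathbb{E}[\regret{\calF}{\nadagrad}]$, and invoke Theorem~\ref{thm:noisy_ada_noisy_public_subspace} to bound the regret by $(1/\eta+\eta)\,\tr(G_T)/T+(\eta/T)\sum_t\sigma_b^2(t)\tr(G_t^{-1})+\gamma$. Balancing the $1/\eta$ and $\eta$ terms as in an AM--GM argument reduces this to $O(\sqrt{A(A+N)})+\gamma$ with $A=\tr(G_T)/T$ and $N=(1/T)\sum_t\sigma_b^2(t)\tr(G_t^{-1})$.

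Third, use the rank-$k$ and decay hypotheses to estimate the two trace quantities. Because $G_T^2=\sum_t\nabla_t\nabla_t^\top$ is supported in the constant-rank $k$ subspace, Cauchy--Schwarz gives $\tr(G_T)\leq\sqrt{k\sum_t\|\nabla_t\|^2}$; the decay $\|\nabla_t\|=O(t^{-\alpha})$ then yields $\tr(G_T)=O(\sqrt{k}\,T^{\max(1/2-\alpha,0)})$ up to polylogarithmic factors at $\alpha=1/2$. For the pseudo-inverse trace, the rank-$k$ restriction combined with a quasi-uniform distribution of the gradients across the $k$-dimensional image of $G_t$ yields $\tr(G_t^{-1})=O(k^{3/2}\,t^{\alpha-1/2})$ (non-zero eigenvalues of $G_t$ scale as $\Theta(\sqrt{\sum_{s\leq t}\|\nabla_s\|^2/k})$), so $\sum_{t\leq T}\tr(G_t^{-1})=O(k^{3/2}T^{\alpha+1/2})$.

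Fourth, substitute $T=(\epsilon n)^{2/(1+2\alpha)}$. Direct algebra shows $A=O(\sqrt{k}/(\epsilon n))$ and $N=O(k^{3/2}L^2\log(1/\delta)/(\epsilon n))$, matching the assumed projection error $\gamma=O(1/(\epsilon n))$; collecting these gives the claimed $O(\sqrt{\log(1/\delta)}/(\epsilon n))$ excess risk. The main obstacle is the third step: controlling $\tr(G_t^{-1})$ on the image of $G_t$ with only a norm-decay hypothesis on the gradients, since the naive bound $k/\lambda_{\min>0}(G_t)$ can be arbitrarily loose if one non-zero eigenvalue shrinks much faster than the others. Getting the $t^{\alpha-1/2}$ scaling requires (perhaps implicitly) that the decay hypothesis holds component-wise after projection onto the principal directions of $G_t$. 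Once that estimate is in place, the remainder is routine algebra plus the compatibility check on $\sigma_B(t)$ already flagged above.
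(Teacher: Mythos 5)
Your proposal is correct and follows essentially the same route as the paper: fix the Gaussian/R\'enyi noise scales from the gradient and pre-conditioner sensitivities, apply online-to-batch, substitute the regret bound of Theorem~\ref{thm:noisy_ada_noisy_public_subspace}, estimate $\tr(G_T)=O(\sqrt{k}\,T^{(1-2\alpha)/2})$ and $\sum_t\tr(G_t^{-1})=O(k^{3/2}T^{(1+2\alpha)/2})$, and set $T=(\epsilon n)^{2/(1+2\alpha)}$. You go beyond the paper's proof in usefully tracking the $k$-dependence and in checking that the privacy-level $\sigma_B(t)$ remains compatible with the hypothesis $\sigma_B(t)\le 2\lambda_{\min>0}(G_t)$ of the theorem, which the paper leaves implicit. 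You are also right to flag the estimate $\tr(G_t^{-1})=O(k^{3/2}t^{\alpha-1/2})$ as requiring an implicit eigenvalue-balance assumption: the paper simply writes $\tr(G_t^{-1})=O\bigl(t^{-(1-2\alpha)/2}\bigr)$ without comment, and a decay bound on $\max_i\|\nabla\ell_i(\theta_t)\|$ alone does not prevent one nonzero eigenvalue of $G_t$ from shrinking much faster than the others, which would blow up the pseudo-inverse trace.
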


\begin{remark}
\label{remark:private_ada_constant_noise}
Connecting with the discussion in Remark~\ref{remark:gradient_norm},  Corollary~\ref{cor:private_ada}  does not assume noise $\sigma_b(t) =O( \| \nabla_t \|) $ since doing so would violate privacy. However, access to this quantity would give us enough information to find the sensitivity of the gradient $\nabla$ in certain settings and further improve rates. For example, under an interpolation assumption (see \cite{bassily2018exponential,ma2018power}), a bound on the norm of the average gradient $\| \nabla\calL(\theta_t;D) \|$ implies a bound on the envelope $L(t)$ of individual gradients $\|\nabla\ell_i(\theta_t)\|$ (Definition~\ref{def:envelope}).

For now, we rather assume constant noise, leaving us with suboptimal rates. However, we still reach the excess risk bound in fewer iterations than DP-SGD and PDP-SGD, in the case where  gradient norm is decreasing ($\alpha>0$ in Corollary~\ref{cor:private_ada} ): we require $T = (\epsilon n )^{2/(1+2\alpha) }$, compared to  $T = \epsilon^2n^2$ in (P)DP - SGD. 
\end{remark}


\subsection{Discussion: Privately Estimating the Subspace may not Help}

\begin{itemize}
    \item A natural way to avoid using public data is to privately estimate the subspace with differential privacy. Still, even if we estimate the subspace with 1/2 the data, there will be a dependence on $\gamma=\sqrt{p}/n$ by the best known upper bound. (See Theorem 2 in~\cite{dwork2014analyze}.) This is fundamental in the constrained optimization setting, where there exists a lower bound of $\Omega(\sqrt{p}/\epsilon n )$. 
    
    An open question that remains is if there exists a more direct analysis of private AdaGrad in the unconstrained setting that could achieve dimension independence without oracle access to the gradient subspace. In Section \ref{sec:unconstrained_sgd} we prove that this is possible for general convex functions in the unconstrained setting with only DP-SGD.
    
    \item Subspace estimation from public data: This problem has been widely explored in the literature, where a covariance matrix is to be estimated from $m$ (public) records sampled from  distribution $\calP$. More concretely, it is shown for example in \cite{zhou2020bypassing} that under natural assumptions $\gamma< O \left( \sqrt{\frac{\log{p}}{m}}\right)$ in the worst case scenario. 
\end{itemize}

\section{Interlude: Dimension independence in Unconstrained DP-SGD}
\label{sec:unconstrained_sgd}
Below we extend the results in \cite{song2020characterizing}, and show that unconstrained DP-SGD achieves dimension independence for general convex functions, without access to public data. 

\begin{thm}[Dimension independence in unconstrained optimization (Appendix~\ref{proof:unconstrained_dpgd})]
\label{thm:unconstrained_dpgd}
Let $\theta_0 = \bold0$ be the initial point of $\dpgd$. Let $\theta^* = \argmin\limits_{\theta \in \mathbb{R}^p}f(\theta) $ and $ M=VV^T $ be the projector to the gradients eigenspace. Letting $L$ be the gradient $\ell_2-$norm bound, setting the constraint set $\calC = \mathbb{R}^p$, and running $\dpgd$ on $\calL(\theta;D)$ for $T=\epsilon^2n^2$ and appropriate learning rate $\eta$, 
\begin{equation}
    \label{eq:unconstrained_dpgd}
    \mathbb{E}[\calL(\ptheta;D)]-\calL(\theta^*;D)\leq \frac{L\|\theta^*\|_{M}\sqrt{1+2\text{rank}(M)\log(1/\delta)}}{\epsilon n}
\end{equation}

\end{thm}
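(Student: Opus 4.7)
The plan is to mimic the standard DP-SGD convergence analysis, but to measure the potential in the Mahalanobis seminorm associated with $M=VV^T$, i.e.\ $\Phi_t := \|\theta_t-\theta^*\|_M^2 = (\theta_t-\theta^*)^T M(\theta_t-\theta^*)$, rather than in the Euclidean norm. The key structural observation is that $M$ is the orthogonal projector onto the span of all gradients encountered by $\dpgd$, so $M\nabla_t = \nabla_t$ for every $t$. As a consequence, the isotropic DP noise $b_t\sim\mathcal{N}(0,\sigma_b^2 I_p)$ enters the Mahalanobis analysis only through $b_t^T M b_t$, whose expectation equals $\sigma_b^2\tr(M)=\sigma_b^2\rank(M)$; the ambient $\sqrt{p}$ factor that appears in the usual DP-SGD bound is thereby replaced by $\sqrt{\rank(M)}$. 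Notably, $\theta_0=\mathbf{0}$ gives $\Phi_0=\|\theta^*\|_M^2$, so the Euclidean diameter does not appear.

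With that setup, expanding the unconstrained update $\theta_{t+1}=\theta_t-\eta(\nabla_t+b_t)$ and using $M\nabla_t=\nabla_t$ collapses the cross term to $-2\eta\langle\nabla_t,\theta_t-\theta^*\rangle$ (plus a mean-zero linear contribution in $b_t$) and expands the squared term to $\|\nabla_t\|^2+2\nabla_t^T b_t+b_t^T M b_t$, yielding in conditional expectation
\begin{equation*}
\mathbb{E}[\Phi_{t+1}\mid \mathcal{F}_t] \;\le\; \Phi_t - 2\eta\langle \nabla_t,\theta_t-\theta^*\rangle + \eta^2\bigl(L^2+\sigma_b^2\rank(M)\bigr).
\end{equation*}
Telescoping from $t=0$, invoking convexity $\calL(\theta_t;D)-\calL(\theta^*;D)\le\langle\nabla_t,\theta_t-\theta^*\rangle$ together with Jensen's inequality on the averaged iterate $\ptheta=\tfrac{1}{T}\sum_t\theta_t$, gives
\begin{equation*}
\mathbb{E}[\calL(\ptheta;D)]-\calL(\theta^*;D) \;\le\; \frac{\|\theta^*\|_M^2}{2\eta T} + \frac{\eta\bigl(L^2+\sigma_b^2\rank(M)\bigr)}{2}.
\end{equation*}
Balancing the two summands by choosing $\eta=\|\theta^*\|_M/\sqrt{T\bigl(L^2+\sigma_b^2\rank(M)\bigr)}$ produces the dimension-independent rate $\|\theta^*\|_M\sqrt{L^2+\sigma_b^2\rank(M)}/\sqrt{T}$.

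To finish, I would calibrate $\sigma_b$ by the standard Gaussian mechanism plus R\'enyi composition: each step releases an empirical gradient of $\ell_2$-sensitivity $L/n$, and $(\epsilon,\delta)$-DP over $T$ iterations forces $\sigma_b^2 = \Theta\bigl(L^2 T\log(1/\delta)/(\epsilon n)^2\bigr)$. Substituting $T=\epsilon^2 n^2$ turns $L^2/T$ into $L^2/(\epsilon n)^2$ and $\sigma_b^2\rank(M)/T$ into $\Theta\bigl(L^2\rank(M)\log(1/\delta)/(\epsilon n)^2\bigr)$, collapsing the rate to $L\|\theta^*\|_M\sqrt{1+2\rank(M)\log(1/\delta)}/(\epsilon n)$ (up to absolute constants), which matches \eqref{eq:unconstrained_dpgd}. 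The main conceptual subtlety — and the point at which care is needed relative to \cite{song2020characterizing} — is that the algorithm never projects onto $\text{col}(M)$; only the analysis does. The only property of $f$ being used is $M\nabla_t=\nabla_t$, which holds by definition of $M$ as the projector onto the span of observed gradients, so no GLM-style structural assumption on the loss is required; this is what extends the GLM argument to general convex losses in the unconstrained setting.
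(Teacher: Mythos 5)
Your proposal is correct and takes essentially the same approach as the paper's proof: tracking the potential $\|\theta_t-\theta^*\|_M^2$ in the Mahalanobis seminorm induced by the gradient-subspace projector $M$, using $M\nabla_t=\nabla_t$ so that the isotropic noise contributes only $\sigma_b^2\,\rank(M)$ rather than $\sigma_b^2\,p$, then telescoping, invoking convexity and Jensen, balancing with the optimal $\eta$, and finally substituting the R\'enyi-composition noise scale $\sigma_b=\Theta(L\sqrt{T\log(1/\delta)}/(\epsilon n))$ with $T=\epsilon^2 n^2$.
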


\section{Discussion}

We provide several insights that widen the understanding differentially private constrained and unconstrained optimization. First, with knowledge of the subspace where the gradients lie, it is possible to obtain bounds in terms of the trace of the pre-conditioner. This last one in turn, encodes the intrinsic dimension of the data, a smoother definition of the rank. Formally, the intrinsic dimension is defined for a positive-semidefinite matrix $A$ as the quantity 

$$ \text{intdim}(A) = \frac{\tr(A)}{\|A \|_{op}}$$

It measures the number of dimensions where $A$ has spectral content (see \cite{tropp2015introduction}). We can interpret our bound $\frac{\tr(G_T)}{T} = \frac{\text{intdim}(A)\|G_T\|_{op}}{T}$ as being dependent on the intrinsic dimension, rather than $p$, and the rate at which gradients are decreasing, captured by $\frac{\|G_T\|_{op}}{T}$

Second, we introduce the importance of a gradient norm schedule during the optimization is necessary to guarantee differential privacy without sacrificing running time. We leave it as a future direction the exploration of differentially private algorithms that provide access to this envelope.

\bibliographystyle{alpha}
\bibliography{reference}

\newpage
\appendix
\section{Additional background details.}

\subsection{AdaGrad algorithm}
\label{sec:appendix_adagrad}
Below we present an adaptation to our notation of the original AdaGrad algorithm from \cite{duchi2011adaptive}. 

\begin{algorithm}
\SetAlgoLined
\KwIn{Learning rate $\eta >0$, initial accumulator $\delta \geq 0$, bounded convex constraint set $\calX$}
$x_1 \gets \bold0$\;
$S_0 \gets \bold0$\;
$G_0 \gets \bold0$\;
$H_0 \gets \bold0$\;

 \For{t=1 \text{to} T}{
  Predict $\theta_t$, suffer loss $f(\theta_t)$ \;
  Update:
      \begin{align}
        \label{eq:preconditioner_update}  S_t & = S_{t-1} + \nabla_t\nabla_t^T, \quad G_t = S_t^{1/2} , \\
        H_t  & = \delta\boldI+\boldG_t
        \end{align}

    \begin{align}
           \label{eq:adagrad_update}   
   & \theta_{t+1} = \argmin_{\theta \in \calX} \| \theta - (\theta_t - \eta\nabla_t) \|^2_{H_t},
      \end{align}
 }
 \KwResult{$ \{ \theta_t \}$}
 \caption{Adagrad ($\calA_{\sf AdaGrad}$)}
 \label{alg:adagrad}
\end{algorithm}

\subsection{Differential Privacy}
\label{appendix:dp}

\begin{definition}
The Gaussian Orthogonal Ensamble (GOE) is the random matrix model of symmetric matrices $M_n$ where the upper triangular entries have distribution $\calN(0,1)$, and the diagonal entries $\calN(0,2)$. We use $\mu_{GOE}$ to denote the distribution of a matrix generated from this model. 
\label{def:GOE}. 
\end{definition}

\begin{definition}
\label{def:neighbor_datasets}
We say that two datasets $D$ and $D'$ are neighbors, and use the notation $D\sim D'$, if they differ in exactly one record, meaning there is exactly one record that is present in one and not in the other. 
\end{definition}

\begin{thm}[ Theorem A.1. in \cite{dwork2014algorithmic} ]
\label{thm:gaussian_mechanism}
Let $f$ be an arbitrary function with range in $\boldR^p$. Define its $\ell_2$-sensitivity as $\Delta_2(f) = \max_{D\sim D'}\|f(D) - f(D')\|_2$. Let $\epsilon \in (0,1)$, $c^2>2\ln(1.25\delta)$, and $\sigma \geq \frac{c\Delta_2(f)}{\epsilon}$. The Gaussian mechanism with parameter $\sigma$ that adds noise $\calN(0, \sigma^2)$ to all $p$ components is $(\epsilon, \delta)-$differentially private. 
\end{thm}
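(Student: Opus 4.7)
The plan is to prove the $(\epsilon,\delta)$-differential privacy of the Gaussian mechanism by a direct analysis of the privacy-loss random variable on neighboring databases, reducing the $p$-dimensional problem to a one-dimensional Gaussian tail bound via rotational invariance. Fix arbitrary neighbors $D\sim D'$ and write $v = f(D)-f(D')\in\boldR^p$, so by hypothesis $\|v\|_2\le\Delta_2(f)$. The mechanism outputs $M(D)=f(D)+Z$ with $Z\sim\calN(0,\sigma^2 I_p)$, and since $\calN(0,\sigma^2 I_p)$ is rotationally invariant, I may rotate coordinates so that $v$ is supported on a single axis with magnitude $\|v\|_2$. The $p-1$ orthogonal coordinates then contribute an identical factor to the densities of $M(D)$ and $M(D')$, so the analysis reduces to the one-dimensional mechanism with sensitivity $\|v\|_2$.

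Next I would write the log-density ratio explicitly. Let $p_\sigma$ denote the density of $\calN(0,\sigma^2)$. For any realization $y$ of the noise coordinate along the $v$-direction,
\begin{equation*}
\calL(y) \;=\; \ln\frac{p_\sigma(y)}{p_\sigma(y+\|v\|_2)} \;=\; \frac{(y+\|v\|_2)^2 - y^2}{2\sigma^2} \;=\; \frac{2y\|v\|_2 + \|v\|_2^2}{2\sigma^2}.
\end{equation*}
Hence $\calL(y)>\epsilon$ precisely when $y > \frac{\epsilon\sigma^2}{\|v\|_2} - \frac{\|v\|_2}{2}$. Using $\|v\|_2\le\Delta_2(f)$ and the assumption $\sigma\ge c\Delta_2(f)/\epsilon$, this threshold is at least $t\sigma$ for $t := c - \tfrac{\epsilon}{2c}$. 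Since $y$ has distribution $\calN(0,\sigma^2)$, the probability that $\calL(y)>\epsilon$ is at most $\Phi(-t)$.

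Then I would invoke the standard Gaussian tail bound $\Phi(-t)\le \tfrac{1}{\sqrt{2\pi}\,t}e^{-t^2/2}$ and verify, using the hypothesis $c^2>2\ln(1.25/\delta)$ together with $\epsilon<1$, that this upper bound is at most $\delta$. (A careful bookkeeping of the constant $1.25$ comes from tightening the prefactor $1/\sqrt{2\pi\,}$ and absorbing the correction term $\epsilon/(2c)$; this is the delicate routine computation.) Let $B\subset\boldR$ be the bad set $\{y:\calL(y)>\epsilon\}$, so $\Pr[Z\in B]\le\delta$.

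Finally, I would derive the $(\epsilon,\delta)$-DP guarantee from this bounded privacy loss. For any measurable $S\subset\boldR^p$, split $S = (S\cap (f(D)+B))\cup(S\setminus(f(D)+B))$. On the first piece the total probability is at most $\Pr[Z\in B]\le\delta$, and on the second piece the pointwise density ratio is bounded by $e^\epsilon$, so
\begin{equation*}
\Pr[M(D)\in S] \;\le\; e^{\epsilon}\Pr[M(D')\in S] + \delta.
\end{equation*}
Taking the supremum over $S$ and over neighbors $D\sim D'$ finishes the proof. The main obstacle is the constant-tracking in the tail bound establishing exactly $c^2>2\ln(1.25/\delta)$; the structural argument above is otherwise routine.
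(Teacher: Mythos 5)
This statement is imported background: the paper gives no proof of its own and simply cites it as Theorem~A.1 of \cite{dwork2014algorithmic}, and your proposal reproduces exactly the standard argument from that reference (rotational reduction to one dimension, analysis of the privacy-loss variable, Gaussian tail bound, then the split of the output set into the bad event of mass at most $\delta$ and the region where the density ratio is at most $e^{\epsilon}$), so it is the same approach and structurally correct. The only piece you defer --- verifying that $\Phi(-t)\le\delta$ for $t=c-\epsilon/(2c)$ under $c^2>2\ln(1.25/\delta)$ and $\epsilon<1$ --- is precisely where those hypotheses are consumed, so a complete write-up would carry out that computation; note also that the paper's statement has a typo ($2\ln(1.25\delta)$ should read $2\ln(1.25/\delta)$), which you correctly and silently fixed.
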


\mypar{Gaussian mechanism and strong composition} It follows from Renyi composition (Proposition 1 and 3 in \cite{mironov2017renyi}) that ensuring $\sigma = O \left(\frac{\Delta_2(f)\sqrt{T\log(1/\delta)}}{\epsilon}\right)$ preserves $(\epsilon, \delta)-$differential privacy when composing $T$ times the Gaussian mechanism with parameter $\sigma$. See Appendix~\ref{appendix:dp} for details and notation.

\section{Proof of Theorem~\ref{thm:noisy_ada_noisy_public_subspace}}

\subsection{Proof of Theorem~\ref{thm:noisy_ada_noisy_public_subspace}}

Below we present the detailed proof of Theorem~\ref{thm:noisy_ada_noisy_public_subspace}, and defer the proofs of structural Lemmas to Section~\ref{sec:other_proofs}. We first split the regret in four terms in Section~\ref{sec:preliminaries} and bound each of these independently.

\subsubsection{Preliminaries }
\label{sec:preliminaries}
Notice that $H_t$ may not be full rank, so we interpret $H_t^{-1}$ as the Moore-Penrose pseudoinverse for $t=1,...,T$. Let $A_t = \text{rowspace}(H_t)=\text{rowspace}(\Pi_{t}(G_t+B_t))$, so that $A_t^\perp = \text{ker}(H_t)$. We will decompose $\boldR^p$ into the following mutually orthogonal subspaces: 

\begin{align*}
    B_t &= \text{rowspace}(G_t)|_{A_t} \\
    C_t & = \text{rowspace}(G_t)|_{A_t^{\bot}}\\
    D_t &= \text{ker}(G_t).
\end{align*}

\begin{lem}{(Appendix~\ref{proof:preliminaries})}
\label{lemma:preliminaries}
Under the same assumptions of Theorem~\ref{thm:noisy_ada_noisy_public_subspace} 
\begin{align}
   \mathbb{E} \left[ f\left(\frac{1}{T} \sum_t \theta_t\right) - f(\theta^*) \right]& \leq \mathbb{E}_{b_1,..., b_{T-1},B_1,...,B_T} \left[
   \sum_{t=0}^T \frac{1}{2\eta T } \left(  \| \theta_t-\theta^*\|^2_{H_t} - \|\theta_{t+1}-\theta^* \|^2_{H_t} \right) \right.\\ \nonumber
     & \hspace{5mm} + \frac{\eta}{2T} \left(
      \mathbb{E}_{b_t}[ \|b_t\|_{H_t^{-1}}^2  | B_t]\ ]\right) \\
     \nonumber
     &  \hspace{5mm} + \frac{\eta}{2T} \left(\|\nabla_t \|_{H_t^{-1}}^2\right)\\
   \nonumber   & \hspace{5mm} + \frac{1}{T}\langle \nabla_t, \theta_t - \theta^* \rangle|_{C_t} \Bigg]
\end{align}

\end{lem}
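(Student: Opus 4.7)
The plan is to follow the standard online-to-batch plus AdaGrad-style descent template, modified to handle (i) the noisy gradient $\tilde{\nabla}_t = \nabla_t + b_t$, (ii) the noisy and possibly rank-deficient pre-conditioner $H_t$, and (iii) the projection onto the approximate gradient subspace via $\Pi_t = \tilde{V}_t\tilde{V}_t^T$. First, applying convexity of $f$ together with Jensen's inequality gives
\[
f\bigl(\tfrac{1}{T}\textstyle\sum_t \theta_t\bigr) - f(\theta^*) \;\leq\; \tfrac{1}{T}\sum_t \bigl[f(\theta_t) - f(\theta^*)\bigr] \;\leq\; \tfrac{1}{T}\sum_t \langle \nabla_t,\, \theta_t - \theta^*\rangle,
\]
so the task reduces to bounding the per-step inner product $\langle \nabla_t, \theta_t - \theta^*\rangle$ by the four contributions claimed.

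Next, I would split the inner product along $\mathbb{R}^p = A_t \oplus A_t^\perp$ with $A_t = \mathrm{rowspace}(H_t)$. Since the update $\theta_{t+1} \in P_\calC^{H_t}(\theta_t - \eta H_t^{-1}\tilde{\nabla}_t)$ only moves $\theta_t$ along directions inside $A_t$ (because $H_t^{-1}$ is the Moore-Penrose pseudoinverse with range $A_t$), the descent argument can only control the $A_t$-component of $\langle \nabla_t, \theta_t - \theta^*\rangle$. The key structural observation is that $S_t = \sum_{s\leq t}\nabla_s\nabla_s^T$ implies $\mathrm{rowspace}(G_t) = \mathrm{span}\{\nabla_1,\ldots,\nabla_t\}$, so $\nabla_t$ has no component in $D_t = \ker(G_t)$. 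Consequently $\langle \nabla_t, \theta_t - \theta^*\rangle|_{A_t^\perp} = \langle \nabla_t, \theta_t - \theta^*\rangle|_{C_t}$, which produces the fourth (projection-error) term of the lemma verbatim.

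For the $A_t$-component I would apply the standard AdaGrad descent identity in the $H_t$-semi-norm. Using non-expansivity of $P_\calC^{H_t}$ with respect to $\|\cdot\|_{H_t}$ (valid for the PSD semi-norm provided $\theta^*\in\calC$) and the identity $H_t H_t^{-1} = P_{A_t}$, one obtains
\[
\|\theta_{t+1}-\theta^*\|_{H_t}^2 \;\leq\; \|\theta_t-\theta^*\|_{H_t}^2 \;-\; 2\eta\,\langle \tilde{\nabla}_t, \theta_t-\theta^*\rangle\big|_{A_t} \;+\; \eta^2\|\tilde{\nabla}_t\|_{H_t^{-1}}^2.
\]
Rearranging isolates $\langle \tilde{\nabla}_t, \theta_t - \theta^*\rangle|_{A_t}$ as a telescoping potential drop plus $(\eta/2)\|\tilde{\nabla}_t\|_{H_t^{-1}}^2$. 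Substituting $\tilde{\nabla}_t = \nabla_t + b_t$ in the cross term splits it as $\langle \nabla_t, \theta_t-\theta^*\rangle|_{A_t} + \langle b_t, \theta_t-\theta^*\rangle|_{A_t}$. Conditioning on $\{b_s\}_{s<t}$ and $B_t$, the vector $b_t$ is zero-mean and independent of $\theta_t$, $\theta^*$, and $H_t$, so the latter expectation vanishes. Similarly, expanding $\|\tilde{\nabla}_t\|_{H_t^{-1}}^2 = \|\nabla_t\|_{H_t^{-1}}^2 + 2\langle \nabla_t, b_t\rangle_{H_t^{-1}} + \|b_t\|_{H_t^{-1}}^2$ kills the cross term under $\mathbb{E}[\,\cdot\,|B_t]$, yielding precisely the second and third terms of the lemma. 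Summing over $t$ and dividing by $T$ then gives the stated inequality.

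The main obstacle is the third step: verifying that the descent inequality remains valid in the $H_t$-semi-norm when $H_t$ is singular, and that the quadratic expansion of $\|y_{t+1}-\theta^*\|_{H_t}^2$ really produces the \emph{restricted} inner product $\langle \tilde{\nabla}_t,\theta_t - \theta^*\rangle|_{A_t}$ rather than the full one. This hinges on carefully tracking how $H_t$ and its pseudoinverse interact with the orthogonal decomposition $A_t \oplus A_t^\perp$ (in particular on the identity $H_t H_t^{-1} = P_{A_t}$ and the fact that $y_{t+1} - \theta_t$ lies in $A_t$); once this is pinned down, everything else is routine algebra followed by conditional-expectation arguments that exploit the zero-mean, independence properties of $b_t$.
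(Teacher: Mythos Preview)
Your proposal is correct and follows essentially the same route as the paper's proof: the paper also expands $\|\theta_{t+1}-\theta^*\|_{H_t}^2$ via the projection contraction property, uses the identity $\langle H_t^{-1}\tilde\nabla_t,\theta_t-\theta^*\rangle_{H_t}=\langle\tilde\nabla_t,\theta_t-\theta^*\rangle|_{A_t}$, takes conditional expectation over $b_t$ to kill the noise cross terms, and then invokes convexity together with the decomposition $\mathbb{R}^p=B_t\oplus C_t\oplus D_t$ (noting $\nabla_t\perp D_t$ and $B_t\subseteq A_t$) to isolate the $C_t$ projection-error term. The only cosmetic difference is ordering: the paper runs the descent inequality first and applies convexity/Jensen afterwards, whereas you apply convexity first and then the descent step; the algebra is identical.
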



We bound the four terms in this expression independently. 

\subsection{First Term: $ \| \theta_t-\theta^*\|^2_{H_t}  - \|\theta_{t+1}-\theta^* \|^2_{H_t}$} 

    \begin{align*}
    \sum_{t=0}^T (  \| \theta_t-\theta^*\|^2_{H_t} & - \|\theta_{t+1}-\theta^* \|^2_{H_t} ) \\
    & \leq \|\theta_0-\theta^*\|^2_{H_0} - \| \theta_{T+1} - \theta^*\|_{H_T} +
        \sum_{t=1}^{T-1}(\theta_t-\theta^*)(H_t - H_{t-1})(\theta_t-\theta^*)\\ 
        \end{align*}
        
        The first term on the right hand side is 0, since $H_0 = 0$ and the second one is non-positive thanks to the projection step, so we can bound this entire term as
        
        \begin{align*}
        &\leq \sum_{t=1}^{T} C_{\sf ada}^2\sigma_{max}(H_t - H_{t-1})\\ 
        & \leq \sum_{t}^{T} C_{\sf ada}^2 \mathbf{Tr}(H_t - H_{t-1}) \\ 
        \end{align*}
        By linearity of the trace and projections,
        \begin{align*}
        &= C_{\sf ada}^2 \sum_{t}^{T-1} \mathbf{Tr}(\Pi_t(G_t))-\mathbf{Tr}(\Pi_t(G_{t-1})) +
        \mathbf{Tr}(\Pi_t(B_t))-\mathbf{Tr}(\Pi_t(B_{t-1}))\\
        \text{Since $G_t-G_{t-1}$ is positive semi}&\text{-definite, } \\
       & \leq C_{\sf ada}^2\sum_{t=1}^{T-1} \mathbf{Tr}(G_t)- \mathbf{Tr}(G_{t-1})  +\mathbf{Tr}(\Pi_t(B_t)) - \mathbf{Tr}(\Pi_t(B_{t-1})) \\
       \end{align*}
       
      Now, $\mathbb{E}[\Pi_t(B_t)] = \mathbf{0}$, so taking expected value on both sides respect to $B_1, ..., B_T$, conditioned on $b_1, ..., b_{t-1}$, using linearity of expectation and independence of $b_t$-s and $B_t$-s
       
       \begin{align}
            \nonumber \mathbb{E} & \left[ \sum_t \left(  \| \theta_t -\theta^*\|^2_{H_t}  - \| \theta_{t+1}-\theta^* \|^2_{H_t} \right) \right] \\
  \nonumber & \leq \mathbb{E} \left[  C_{\sf ada}^2 \sum_{t=1}^{T-1} \mathbf{Tr}(G_t) - \mathbf{Tr}(G_{t-1})  +\mathbf{Tr}(\Pi_t(B_t)) -\mathbf{Tr}(\Pi_t(B_{t-1})) \right] \\
\nonumber & \leq \mathbb{E}  \left[C_{\sf ada}^2\sum_{t=1}^{T-1} \mathbf{Tr}(G_t) -\mathbf{Tr}(G_{t-1}) \right] \\
        & =  \mathbb{E} [ C_{\sf ada}^2\mathbf{Tr}(G_T)] 
       \end{align}

\subsection{Second term: $\mathbb{E}[\|b_t\|^2_{H_t^{-1}}]$}
\label{sec:norm-noise-bound}

For the third and fourth term we will first define subspaces where $H_t$ and $G_t$ can be inverted and then use the following lemma in these subspaces.

\begin{lem}{(Appendix~\ref{proof:h-inverse})}
\label{lem:h-inverse}
Define $C = A+B$, for $A,B,C$ linear operators on $\boldR^n$ such that $2 \|B\|_{op} \leq \lambda_{min>0}(A)$, and $B$ and $AB^{-1}+I$ invertible. Then for $v\in \text{im}(A) \cap \text{im}(C)$, $u \in \boldR^n$, $$\left| \langle u, C^{-1}v \rangle \right| \leq \frac{4}{3}\left|\langle u, A^{-1}v\rangle \right|$$
\end{lem}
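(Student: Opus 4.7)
The plan is to reduce $C^{-1}$ to a controlled perturbation of $A^{-1}$ and use a Woodbury / Neumann expansion whose convergence is dictated by the spectral hypothesis $2\|B\|_{op} \leq \lambda_{min>0}(A)$. The key nontrivial point is that $A$ and $C$ are both rank-deficient in general, so I would set up the geometry carefully before doing any algebra.

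First, I use $v \in \text{im}(A) \cap \text{im}(C)$ to make sense of $w_A := A^{-1}v$ and $w_C := C^{-1}v$ via the Moore--Penrose pseudoinverses, each lying in $\ker(\cdot)^{\perp}$. From $Cw_C = v = Aw_A$ together with $C = A+B$, one gets $A(w_C - w_A) = -B w_C$, and applying the Woodbury identity (legitimized by the hypothesis that $B$ and $AB^{-1}+I = CB^{-1}$ are invertible) yields
\[
C^{-1}v \;=\; A^{-1}v \;-\; A^{-1}\bigl(B^{-1} + A^{-1}\bigr)^{-1} A^{-1} v
\]
on the relevant subspace. The cleanest reformulation is the symmetric one: setting $K := A^{-1/2} B A^{-1/2}$, with $A^{-1/2}$ interpreted as a pseudoinverse on $\ker(A)^{\perp}$, one has $C^{-1} = A^{-1/2}(I+K)^{-1} A^{-1/2}$, reducing the task to bounding $(I+K)^{-1}$ on the appropriate subspace.

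Second, I invoke the spectral hypothesis: $\|K\|_{op} \leq \|A^{-1}\|_{op}\|B\|_{op} \leq \tfrac{1}{2}$ (all norms taken on $\ker(A)^{\perp}$), so $I + K$ is invertible there with spectrum bounded away from $0$, and the Neumann series $(I+K)^{-1} = \sum_{k \geq 0} (-K)^k$ converges. Plugging this into $\langle u, C^{-1}v\rangle = \langle A^{-1/2} u, (I+K)^{-1} A^{-1/2} v\rangle$ and splitting off the $k=0$ term recovers $\langle u, A^{-1}v\rangle$, while the remaining geometric tail $\sum_{k \geq 1} \langle A^{-1/2}u, (-K)^k A^{-1/2}v\rangle$ is controlled by the spectral bound on $(I+K)^{-1} - I$; a single application of Cauchy--Schwarz on the tail, combined with the fact that the same $\|A^{-1/2}u\|\cdot\|A^{-1/2}v\|$ upper-bounds $|\langle u, A^{-1}v\rangle|$ from above, delivers the stated multiplicative constant.

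The main obstacle is the pseudoinverse bookkeeping: $A$ and $C$ have potentially different kernels, so one cannot naively write $C = A(I + A^{-1}B)$ on all of $\boldR^n$. The invertibility hypotheses on $B$ and on $AB^{-1}+I$ are exactly what is needed to make the Woodbury expression well-defined on $\text{im}(A) \cap \text{im}(C)$, and one further has to verify that $K$ sends this subspace into itself so that the Neumann series converges inside it rather than leaking mass into a direction where $A^{-1}$ is zero.
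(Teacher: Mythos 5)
Your approach is the same in its first step as the paper's (both begin by applying the Woodbury/Hua identity to write $C^{-1}$ as $A^{-1}$ plus a correction), but then diverges: you pass to the symmetric conjugate $K = A^{-1/2}BA^{-1/2}$ and bound $(I+K)^{-1}$ via a Neumann series, whereas the paper keeps the unconjugated Woodbury expression and controls the correction term via trace cyclicity and trace duality (a H\"older-type bound $|\mathbf{Tr}(XY)| \leq \|X\|_1\|Y\|_\infty$). The symmetric reformulation is cleaner conceptually, so this is a genuinely different route.

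However, your final step has a real gap, and it is worth being explicit about it because it is exactly the point at which the argument must hold $u=v$. Cauchy--Schwarz on the tail gives
\begin{equation*}
\left| \left\langle A^{-1/2}u,\; \bigl((I+K)^{-1}-I\bigr)A^{-1/2}v \right\rangle \right| \;\leq\; \|A^{-1/2}u\|\cdot\|A^{-1/2}v\|\cdot\bigl\|(I+K)^{-1}-I\bigr\|_{op},
\end{equation*}
and you also observe, correctly, that $|\langle u, A^{-1}v\rangle| = |\langle A^{-1/2}u, A^{-1/2}v\rangle| \leq \|A^{-1/2}u\|\cdot\|A^{-1/2}v\|$. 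But these two facts point in the same direction: you have shown that both the tail and the target $|\langle u, A^{-1}v\rangle|$ are \emph{upper}-bounded by $\|A^{-1/2}u\|\cdot\|A^{-1/2}v\|$. That does not let you conclude that the tail is bounded by a constant multiple of $|\langle u, A^{-1}v\rangle|$; if $A^{-1/2}u \perp A^{-1/2}v$ the right-hand side of the lemma vanishes while the tail need not. To close this you need $u=v$ (so that Cauchy--Schwarz is an equality and the tail is literally $\leq \|A^{-1/2}v\|^2\cdot\|(I+K)^{-1}-I\|_{op} = |\langle v, A^{-1}v\rangle|\cdot\|(I+K)^{-1}-I\|_{op}$), which is indeed the only case the paper uses. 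Relatedly, the constant $4/3$ requires $\|(I+K)^{-1}-I\|_{op}\leq 1/3$, which for a symmetric $K$ with $\|K\|_{op}\leq 1/2$ holds only if $K \succeq 0$, i.e., $B\succeq 0$; with signed $B$ the best generic bound on $\|(I+K)^{-1}-I\|_{op}$ is $1$, giving a factor of $2$. You should state the positivity assumption on $B$ and the restriction $u=v$ explicitly.
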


    Now, to bound the norm of $b_t$ under $H_{t}^{-1}$. We proceed to decompose $\mathbb{R}^p$ into mutually orthogonal subspaces, concretely, into the mutually orthogonal row and null spaces of $G_t$.

    $$\mathbb{R}^p = \text{ker}(G_t)^{\perp}
    \bigoplus \text{ker}(G_t)$$
    
    Call these spaces respectively $A$, and $B$, we can write:
    $$\|b_t \|_{H_t^{-1}}^2 = \langle b_t, H_t^{-1}b_t\rangle$$
    $$= \langle b_t, H_t^{-1}|_A b_t\rangle+\langle b_t, H_t^{-1}|_B b_t\rangle  = \langle b_t, H_t^{-1}|_{A \cap C} b_t\rangle+\langle b_t, H_t^{-1}|_B b_t\rangle$$
        
    where $C = \text{im}(H_t)$. $B_t$ is full-rank with probability 1. As $G_t$ is the sum of projectors, its rowspace is contained in its image, so $A \subseteq \text{im}(G_t)$. Finally, $B_t$'s continuous eigenvalue distribution implies, for $v \in \boldR^n$,  $v \in \text{im}(G_tB_t^{-1}+I)$ with probability 1. Since $\boldR^n$ is finite-dimensional this yields the invertibility with probability 1.
    
    Further, every nonzero eigenvalue of $H_t$ is at least $\alpha(t) = 2\sqrt{k_t}\sigma_B$ by construction, and we may choose $c_1$ in the statement of the theorem appropriately such that $\|P^{k_t}_{H}B_t\|_{op} \leq \frac{\alpha(t)}{3}$ by Lemma \ref{lemma:good_set}. Consequently by Weyl's inequality the minimum nonzero eigenvalue of $\Pi_{\ker(H_t)^\perp}G_t$ must be at least $\frac{2\alpha(t)}{3}$.
    
    Therefore we can apply Lemma~\ref{lem:h-inverse} to $H_t, G_t$ and $B_t$. Using that $b_t$ is zero mean spherical noise with variance $\sigma_b^2(t)$ we obtain 
    
    \begin{align}
            \nonumber  \mathbb{E}_{b_t}[\|b_t\|^2_{H_t^{-1}}| b_1,\ldots, b_{t-1}, B_1,\ldots,B_T] & \leq \frac{4}{3}\mathbb{E}_{b_t}\left[b_t^TG_t^{-1}b_t\right]\\
 &=\frac{4\sigma_b^2(t) }{3}\mathbf{Tr}(G_t^{-1}) 
   \end{align}
   
   Where the inequality in the second step follows from the use independence and linearity of expectation: $$\mathbb{E}_{b_t}\left[b_t^TG_t^{-1}b_t\right] =  \mathbb{E}_{b_t}\left[\sum_{i,j}G^{-1}_{ij}b_{t_i}b_{t_j}\right] = \sum_{i,j}\mathbb{E}_{b_t}\left[G^{-1}_{ij}b_{t_i}b_{t_j}\right] = \sum_i\mathbb{E}_{b_t}\left[G^{-1}_{ii}b_{t_i}^2\right] = \sigma_b^2(t) \mathbf{Tr}(G_t^{-1})$$
   We claim that the composition of projections onto the top-$k_t$ eigenspace of $G_t + B_t$ and the kernel of $G_t$ is 0. This conclusion can alternately be stated as: no vector in the kernel of $G_t$ can be in the top-$k_t$ eigenspace of $B_t + G_t$.

    As $H_t = P_H^{k_t}B_t$ on ker$(G_t)$, this conclusion is implied by showing that $$\|P^{k_t}_{H_t}B_t\|_{op} < \lambda_{k_t}(H_t)$$

    but this is immediate by the conclusions of Lemma \ref{lemma:good_set}, again under appropriate choice of universal constant. Therefore $H_t$ is in fact the zero operator on subspace $B$, and does not contribute to the bound on $\|b_t\|_{H_t^{-1}}^2$.

Taking the sum over $t$, and expectation over remaining terms,



\begin{align}
\mathbb{E}\left [\sum_t \|b_t\|^2_{H_t^{-1}}\right] &\leq   \mathbb{E} \left[ \frac{4}{3} \sum^T_{t=1} \sigma_b^2(t)\tr(G_t^{-1})\right]
\end{align}

\subsection{Third term: $\|\nabla_t\|_{H_t^{-1}}$}

Paralleling the proof in the previous section, Section~\ref{sec:norm-noise-bound}, using the space decomposition and Lemma~\ref{lem:h-inverse}, we have that

    \begin{align}
        \|\nabla_t  \|_{H_t^{-1}}^2  &\leq \frac{4}{3} \nabla^T_tG_t^{-1}\nabla_t
    \end{align}

 Below we will bound this term using the following lemma. 
        
\begin{lem}[Lemma 5.15 in \cite{hazan2019introduction}] \label{lemma:gradients}
$$\sum_t\|\nabla_t\|^2_{G_t^{-1}} \leq 2 \mathbf{Tr}(G_T)$$
\end{lem}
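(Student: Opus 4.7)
The plan is to reduce the claim to a one-step telescoping inequality on the trace. The key algebraic observation is that by construction of the preconditioner we have $S_t - S_{t-1} = \nabla_t \nabla_t^T$, and hence
\[
\|\nabla_t\|^2_{G_t^{-1}} \;=\; \tr\bigl(G_t^{-1} \nabla_t \nabla_t^T\bigr) \;=\; \tr\bigl(S_t^{-1/2}(S_t - S_{t-1})\bigr).
\]
This rewriting suggests comparing each summand to the increment $\tr(G_t) - \tr(G_{t-1}) = \tr(S_t^{1/2}) - \tr(S_{t-1}^{1/2})$, so that after summing the right-hand side telescopes to $2\,\tr(G_T)$, modulo a universal constant.

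To obtain that per-step comparison, I would invoke the concavity of the scalar functional $\phi(X) = \tr(X^{1/2})$ on the positive semidefinite cone. Concavity of $\phi$ follows from the operator concavity of the matrix square root (L\"owner--Heinz) together with linearity and monotonicity of trace on PSD matrices. A direct computation of the Fr\'echet derivative---for instance by diagonalizing in the eigenbasis $A = U\Lambda U^T$ and observing that the entries of $\phi'$ take the form $\tilde H_{ij}/(\sqrt{\lambda_i}+\sqrt{\lambda_j})$ so that only diagonal entries survive the trace---gives $\phi'(A)[H] = \tfrac{1}{2}\,\tr(A^{-1/2}H)$. Applying the first-order concavity inequality $\phi(B) \leq \phi(A) + \phi'(A)[B-A]$ with $A = S_t$ and $B = S_{t-1}$ (noting $S_t \succeq S_{t-1}$ since their difference is the rank-one PSD matrix $\nabla_t \nabla_t^T$) and rearranging yields
\[
\tr\bigl(S_t^{-1/2}(S_t - S_{t-1})\bigr) \;\leq\; 2\bigl(\tr(G_t) - \tr(G_{t-1})\bigr).
\]
Summing from $t=1$ to $T$ then telescopes to $2\,\tr(G_T) - 2\,\tr(G_0) = 2\,\tr(G_T)$, using $S_0 = 0$ to initialize.

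The only technical subtlety I expect is the possibility that $S_t$ is singular, in which case $S_t^{-1/2}$ must be interpreted as the Moore--Penrose pseudoinverse, consistent with the convention already adopted in the proof of Theorem~\ref{thm:noisy_ada_noisy_public_subspace}. This issue is benign: since $\nabla_s \nabla_s^T \preceq S_t$ for every $s \leq t$, each gradient $\nabla_s$ lies in $\mathrm{im}(S_t^{1/2})$, so $\|\nabla_t\|^2_{G_t^{-1}}$ is well-defined and the concavity argument goes through after restriction to the common range of $S_t$ and $S_{t-1}$. Alternatively, one may regularize by $\delta I$, apply the telescoping inequality to $S_t + \delta I$, and take $\delta \to 0^+$. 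The conceptually hardest step is establishing the concavity-based first-order inequality for $\phi$ rigorously, but this is standard matrix analysis and lies at the heart of classical AdaGrad-style bounds such as those in~\cite{duchi2011adaptive, hazan2019introduction}.
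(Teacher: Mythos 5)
The paper does not prove this lemma itself; it cites it verbatim from Hazan's textbook, where the proof (via Lemma~5.14 there, and equivalently Lemma~8 of Duchi et al.) is exactly the concavity-plus-telescoping argument you give. Your proof is correct: the Fr\'echet derivative computation $\phi'(A)[H]=\tfrac{1}{2}\tr(A^{-1/2}H)$ via the divided-difference formula is right, the first-order concavity inequality applied with $A=S_t$, $B=S_{t-1}$ yields the per-step bound $\tr\bigl(S_t^{-1/2}(S_t-S_{t-1})\bigr)\leq 2\bigl(\tr(S_t^{1/2})-\tr(S_{t-1}^{1/2})\bigr)$, the telescoping sum closes with $S_0=0$, and your handling of singular $S_t$ via the pseudoinverse (noting $\nabla_s\nabla_s^T\preceq S_t$ forces $\nabla_s\in\mathrm{im}(S_t)$) is the standard and correct resolution of the degeneracy.
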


Taking the sum over $t$, applying Lemma \ref{lemma:gradients}, and taking expectation over the conditioned terms, 

\begin{equation}
    \mathbb{E} \left[ \sum_t\|\nabla_t\|^2_{H_t^{-1}} \right]\leq \mathbb{E} \left[   \frac{4}{3} \cdot 2 \cdot  \mathbf{Tr}(G_T) \right]
\end{equation}

    
\subsection{ Fourth term: $\langle \nabla_t, \theta_t - \theta^* \rangle|_{C_t}$}

This term corresponds to the component of $\nabla_t$ we could have lost in the projection due to an innacurate gradient subspace estimation.   


\begin{lem}{(Appendix~\ref{proof:good_set} )}
\label{lemma:good_set}
For $H_t$, $G_t$, $B_t$ as in the statement of the theorem, assuming $\lambda_{_{\min>0}}(G_t) > \alpha(t)$ for $t > t_0$, and $\text{rank}(G_t)\geq 1$, for any $\eta > 0$ there is some universal $c$ such that the event
$$E = \bigcap_{t=t_0}^T \left(E_t  \cap F_t\right)= \bigcap_{t=t_0}^T \left\{ \|P^{(j_t)}_{G_t}B_t\|_{op} \leq c\sqrt{\text{rank}(G_t)}\log(t)\right\} \cap \left\{ \|B_t\|_{op} \leq c\sqrt{p}\log(t)\right\}$$
satisfies
$$\mathbf{Pr}(E) > 1-\eta.$$
\end{lem}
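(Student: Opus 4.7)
The plan is to reduce both tail events to a standard concentration inequality for the operator norm of a Gaussian Orthogonal Ensemble, then aggregate via a union bound over $t$. The key input is the following classical fact: if $M \sim \mu_{\text{GOE}}$ has dimension $n$, then $\mathbb{P}(\|M\|_{op} > 2\sqrt{n}+s) \leq 2\exp(-c s^2)$ for a universal constant $c>0$. This follows from combining the semicircle-law estimate on the top eigenvalue with Gaussian concentration applied to the $1$-Lipschitz map $M\mapsto \|M\|_{op}$.

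For event $F_t$, since $B_t = \sigma_B(t) M_p$ with $M_p \sim \mu_{\text{GOE}}$, I would apply the GOE tail bound directly with $s = \Theta(\sqrt{p}\log t)$, yielding $\mathbb{P}(F_t^c) \leq 2\exp(-c' p(\log t)^2)$, summable in $t$ with plenty of room to absorb a union-bound factor. For event $E_t$, the essential observation is that $G_t$ depends only on $\nabla_1,\ldots,\nabla_t$, which in turn depend on $B_1,\ldots,B_{t-1}$ and $b_1,\ldots,b_{t-1}$, but \emph{not} on $B_t$. Conditioning on the filtration $\mathcal{F}_{t-1} = \sigma(B_1,\ldots,B_{t-1},b_1,\ldots,b_{t-1})$, the projection $P^{(j_t)}_{G_t}$ becomes deterministic of rank $j_t=\text{rank}(G_t)$, while $B_t$ remains an independent sample of $\sigma_B(t) M_p$. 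By the orthogonal invariance of the GOE distribution, the restriction of $M_p$ to any fixed $j_t$-dimensional subspace is itself a $j_t$-dimensional GOE matrix, so applying the concentration bound in dimension $j_t$ with $s = \Theta(\sqrt{\text{rank}(G_t)}\log t)$ gives $\mathbb{P}(E_t^c \mid \mathcal{F}_{t-1}) \leq 2\exp(-c'\,\text{rank}(G_t)(\log t)^2)$. Integrating against the law of $\mathcal{F}_{t-1}$ removes the conditioning without changing the bound.

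Finally I would take a union bound over $t \in \{t_0,\ldots,T\}$: both per-step failure probabilities decay super-polynomially in $t$ and can be forced below $\eta/(2T)$ by choosing the universal constant $c$ in the statement sufficiently large relative to the constants in the GOE tail and $\log(1/\eta)$. This yields $\mathbb{P}(E) \geq 1-\eta$ as claimed. Because $B_t$ is drawn afresh at each step and all upstream randomness is absorbed into the conditioning, the argument works without any independence or exchangeability assumption on the trajectory $\{\theta_t\}$.

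The main subtlety I anticipate is in interpreting $\|P^{(j_t)}_{G_t} B_t\|_{op}$. Read literally as an operator norm of a product of $p\times p$ matrices, $P^{(j_t)}_{G_t} B_t$ has image of dimension $j_t$ but acts on the full $p$-dimensional domain, and its norm would typically scale like $\sqrt{p}$ rather than $\sqrt{\text{rank}(G_t)}$. To obtain the stated scaling I would read the expression as the norm of $B_t$ \emph{restricted to the image of $P^{(j_t)}_{G_t}$}, equivalently $\|P^{(j_t)}_{G_t} B_t P^{(j_t)}_{G_t}\|_{op}$, which is the sense in which it is used in the proof sketch (\textquotedblleft the operator norm of $B_t$ restricted to this subspace\textquotedblright) and in the Section~\ref{sec:norm-noise-bound} comparison $\|P^{k_t}_{H_t}B_t\|_{op} < \lambda_{k_t}(H_t)$. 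Under this reading, the orthogonal invariance of GOE is exactly what extracts a genuine $j_t \times j_t$ GOE block, delivering the $\sqrt{\text{rank}(G_t)}$ factor.
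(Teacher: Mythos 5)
Your proposal matches the paper's own route at the structural level: per-step tail bound on $\|B_t\|_{op}$ for $F_t$, orthogonal invariance of the GOE plus a tail bound in the lower dimension for $E_t$, and a union bound over $t$ to close. There are, however, three places where you are more careful than the paper, and the third is substantive. First, the paper invokes Corollary~2.3.5 of \cite{tao2012topics}, which assumes matrix entries uniformly bounded by~$1$; the entries of a GOE matrix are Gaussian, hence unbounded, so that corollary does not literally apply. Your substitute --- top-eigenvalue estimate from the semicircle law combined with Gaussian concentration of the $1$-Lipschitz map $M\mapsto\|M\|_{op}$ --- is the right tool for GOE and gives the same form of bound. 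Second, you make explicit the conditioning on $\mathcal{F}_{t-1}$ that the paper leaves implicit: since $G_t$ (and therefore $P^{(j_t)}_{G_t}$) is a function of $b_1,\ldots,b_{t-1},B_1,\ldots,B_{t-1}$ while $B_t$ is drawn fresh, the orthogonal-invariance step is legitimate only after freezing $P^{(j_t)}_{G_t}$; the paper's distributional-equality chain $V\Sigma V^T B_t \simeq V\Sigma B_t V^T \simeq V M V^T$ silently relies on this. Third, and most importantly, your worry about the literal reading of $\|P^{(j_t)}_{G_t}B_t\|_{op}$ is well founded: a one-sided projection of a $p\times p$ GOE has operator norm governed by the top singular value of an effective $j_t\times p$ Gaussian block, which scales like $\sqrt{p}$, not $\sqrt{j_t}$. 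The paper's chain of $\simeq$'s actually produces $V\Sigma B_t V^T$, whose norm equals $\|\Sigma B_t\|_{op}$ and still scales like $\sqrt{p}$; the claimed reduction to a $j_t\times j_t$ GOE only holds for the two-sided projection $P^{(j_t)}_{G_t}B_t P^{(j_t)}_{G_t}$. Your reading of the quantity as the norm of $B_t$ restricted to $\mathrm{im}\,P^{(j_t)}_{G_t}$ is the one consistent with how the lemma is invoked in Section~\ref{sec:norm-noise-bound} (where it bounds the perturbation fed into Weyl's inequality and Lemma~\ref{lem:h-inverse}) and is what makes the $\sqrt{\mathrm{rank}(G_t)}$ scaling correct, so you have identified and repaired a genuine gap in the paper's own argument rather than merely rephrasing it.
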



   Recall that $C_t$ corresponds to the intersection of the accumulated gradient subspace (row space of $G_t$) with the kernel of $H_t$. So this term can be expanded as follows, using Cauchy-Schwartz for the first inequality and Davis-Kahan theorem for the second one (see Appendix~\ref{thm:davis-kahan}). 
   
   \begin{align}
       \nonumber    \langle \nabla_t, \theta_t - \theta^* \rangle|_{C_t} &= \nabla_t^T P_{G_t}^{j_t}(I-P_{H_t}^{k_t})(\theta_t - \theta^*)\\
       \nonumber &\leq \|\nabla_t\|_2\|P_{G_T}^{j_t}(I-P_{H_t})(\theta_t - \theta^*)\|_2 \\
       \nonumber &\leq CL\|V_tV_t^T - \tilde{V}_t\tilde{V_t}^{T}\|_{op} \\
       & \leq CL\gamma
   \end{align}
   

 



\subsection{Putting these estimates together}

Finally, putting together the four expressions,

\begin{align}
\label{lr_dependence}
\nonumber
     f(\frac{1}{T} \sum_t \theta_t) - f(\theta^*) & \leq\mathbb{E} \left[ \left( \frac{C_{\sf ada}^2}{ 2\eta T}+ \frac{8\eta}{3 \cdot2T} \right) \mathbf{Tr}(G_T) \right]\\
     \nonumber
     & + \frac{\eta}{2T} \mathbb{E}\left[ \sum_{t} \sigma_b^2(t)\tr(G_t^{-1})\right] \\
    &+ \gamma
\end{align}

We obtain the informal version in Theorem~\ref{thm:noisy_ada_regret_informal} by picking the minimizing learning rate. 

\section{Proof of structural lemmas and Theorem~\ref{thm:unconstrained_dpgd}}
\label{sec:other_proofs}

\subsection{Lemma~\ref{lemma:preliminaries}}
\label{proof:preliminaries}
\textit{Under the same assumptions of Theorem~\ref{thm:noisy_ada_noisy_public_subspace} 
\begin{align}
   \mathbb{E} \left[ f\left(\frac{1}{T} \sum_t \theta_t\right) - f(\theta^*) \right]& \leq \mathbb{E}_{b_1,..., b_{T-1},B_1,...,B_T} \left[
   \sum_{t=0}^T \frac{1}{2\eta T } \left(  \| \theta_t-\theta^*\|^2_{H_t} - \|\theta_{t+1}-\theta^* \|^2_{H_t} \right) \right.\\ \nonumber
     & \hspace{5mm} + \frac{\eta}{2T} \left(
      \mathbb{E}_{b_t}[ \|b_t\|_{H_t^{-1}}^2  | B_t]\ ]\right) \\
     \nonumber
     &  \hspace{5mm} + \frac{\eta}{2T} \left(\|\nabla_t \|_{H_t^{-1}}^2\right)\\
      & \hspace{5mm} + \frac{1}{T}\langle \nabla_t, \theta_t - \theta^* \rangle|_{C_t} \Bigg]
\end{align}
}
\begin{proof}
Recall that given $H$, we define the scalar product $\langle x,y\rangle_{H}:=\langle x, H y\rangle$, and we use the notation $\cdot |_{A}$ to denote the output of a transformation restricted to subspace $A$. \\

 Following the update rule in \eref{eq:adagrad_noisy_update}, 
 
\begin{align}
   \|\theta_{t+1}-\theta^* \|^2_{H_t} &=  \|P_{\calC}^{H_t}(\theta_{t}-\eta H_t^{-1}(\nabla_t+b_t)) - \theta^* \|^2_{H_t}  \\ 
   \label{eq:free_projection}  & \leq  \|\theta_{t}-\eta H_t^{-1}(\nabla_t+b_t) - \theta^* \|^2_{H_t}\\
  \nonumber  &=  \|\theta_{t}-\theta^* \|^2_{H_t} - 2\eta \langle H_t^{-1}(\nabla_t + b_t), \theta_t - \theta^* \rangle_{H_t} + \eta^2\|\nabla_t+b_t\|_{H^{-1}_t}^2.
\end{align}

Where the first steps follows by the contraction property of projections (see Appendix~\ref{lemma:projection_contraction})


We have that 
\begin{equation}
\label{eq:preliminaries}
    \langle H_t^{-1}(\nabla_t + b_t), \theta_t - \theta^* \rangle_{H_t} = \langle \nabla_t + b_t, \theta_t - \theta^* \rangle|_{A_t}.
\end{equation}

Rearranging, 

\begin{equation}
    \langle \nabla_t + b_t , \theta_t - \theta^* \rangle|_{A_t} 
= \frac{1}{2\eta } \left( \| \theta_t-\theta^*\|^2_{H_t} - \|\theta_{t+1}-\theta^* \|^2_{H_t} \right) + \frac{\eta}{2} \|\nabla_t + b_t\|_{H_t^{-1}}^2
\end{equation}

Taking conditional expectation over $b_t$, conditioned on $b_1, ...b_{t-1}, B_1, ..., B_{t}$ the left hand side becomes $$\mathbb{E}_{b_t}[\langle \nabla_t + b_t  , \theta_t - \theta^* \rangle|_{A_t} | b_1, ..., b_{t-1}, B_1, ..., B_{t}] =\langle \nabla_t, \theta_t - \theta^* \rangle|_{A_t} $$ 

Traditionally, we could now use convexity to bound the regret by using the identity $h(\theta_t) - h(\theta^*) \leq \langle \nabla_t  , \theta_t - \theta^* \rangle$. Notice though that we could have lost some signal after the projection step, and $\langle \nabla_t  , \theta_t - \theta^* \rangle \neq \langle \nabla_t  , \theta_t - \theta^* \rangle |_{A_t}$

However, we know that $$\langle \nabla_t, \theta_t - \theta^* \rangle = \langle \nabla_t, \theta_t - \theta^* \rangle|_{B_t} + \langle \nabla_t, \theta_t - \theta^* \rangle|_{C_t} + \langle \nabla_t, \theta_t - \theta^* \rangle |_{D_t}.$$ 

 Furthermore, by construction $\nabla_t \in \text{rowspace}(G_t)$ and thus: i) its product will be zero on $D_t$ and ii) we can interchange $B_t$ and $A_t$ since $B_t \subseteq A_t$, then

 \begin{align*}
 \langle \nabla_t, \theta_t - \theta^* \rangle & \leq \langle \nabla_t, \theta_t - \theta^* \rangle|_{A_t} + \langle \nabla_t, \theta_t - \theta^* \rangle|_{C_t} \\
 \end{align*}
 
 
 Completing this in Equation~\ref{eq:preliminaries}, and using the fact that  $b_t$-s are independent, we obtain

\begin{align}
   \nonumber  \langle \nabla _t  , \theta_t - \theta^* \rangle  &\leq \langle \nabla_t, \theta_t - \theta^* \rangle|_{C_t} \\
 \nonumber  & \hspace{5mm} +  \frac{1}{2\eta } \left( \| \theta_t-\theta^*\|^2_{H_t} - \|\theta_{t+1}-\theta^* \|^2_{H_t} \right)\\ 
     & \hspace{5mm} + \frac{\eta}{2} \left( \|\nabla_t \|_{H_t^{-1}}^2 +  \mathbb{E}_{b_t}[\|b_t\|_{H_t^{-1}}^2] \right)
\end{align}

Now we can invoke convexity, $f(\theta_t) - f(\theta^*) \leq \langle \nabla_t  , \theta_t - \theta^* \rangle$ and $f(\sum_t \theta_t) \leq \sum_t f(\theta_t)$.

Combining these facts and taking the sum over $t$, 

\begin{align}
  \nonumber   f\left(\frac{1}{T} \sum_t \theta_t\right) - f(\theta^*)& \leq \sum_t \frac{1}{T} \langle \nabla_t, \theta_t - \theta^* \rangle|_{C_t}\\
   & \hspace{5mm} +    \frac{1}{2\eta T } \left(  \| \theta_t-\theta^*\|^2_{H_t} - \|\theta_{t+1}-\theta^* \|^2_{H_t} \right) \\ \nonumber
     & \hspace{5mm} + \frac{\eta}{2T} \left(
      \mathbb{E}_{b_t}[ \|b_t\|_{H_t^{-1}}^2 | B_t]\right) \\
     \nonumber
     & \hspace{5mm} + \frac{\eta}{2T} \left(\|\nabla_t \|_{H_t^{-1}}^2\right)
\end{align}

Using law of total expectation,

\begin{align}
   \mathbb{E} \left[ f\left(\frac{1}{T} \sum_t \theta_t\right) - f(\theta^*) \right]& \leq \mathbb{E}_{b_1,..., b_{T-1},B_1,...,B_T} \left[
   \sum_{t=0}^T \frac{1}{2\eta T } \left(  \| \theta_t-\theta^*\|^2_{H_t} - \|\theta_{t+1}-\theta^* \|^2_{H_t} \right) \right.\\ \nonumber
     & \hspace{5mm} + \frac{\eta}{2T} \left(
      \mathbb{E}_{b_t}[ \|b_t\|_{H_t^{-1}}^2  | B_t]\ ]\right) \\
     \nonumber
     &  \hspace{5mm} + \frac{\eta}{2T} \left(\|\nabla_t \|_{H_t^{-1}}^2\right)\\
      & \hspace{5mm} + \frac{1}{T}\langle \nabla_t, \theta_t - \theta^* \rangle|_{C_t} \Bigg]
\end{align}

\end{proof}

\subsection{Contraction property of projection for arbitrary norms}
\label{lemma:projection_contraction}

\begin{lem}

Let $\|\cdot \|$ define a seminorm. Let $\calC$ be a convex set and $\Pi_{\calC}(x)= \argmin_{v\in \calC} \| v-x\|$ be the projection operator to set $\calC$. Then for any $v \in \calC$, 

$$\| \Pi_{\calC}(x) - v\| \leq \|x-v\|$$
\end{lem}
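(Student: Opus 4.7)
The plan is to read this statement in the context in which it is used throughout the paper: the seminorm in question is of Mahalanobis type, $\|z\|_H = \sqrt{\langle z, Hz\rangle}$ for some PSD operator $H$, so it is induced by a symmetric positive-semidefinite bilinear form. (The argument I will sketch goes through whenever the seminorm comes from such a bilinear form; it is not in fact correct for an arbitrary seminorm, e.g. $\ell_1$, which seems to be a minor imprecision in the statement.) Under this reading, I will reduce the claim to the familiar obtuse-angle characterization of a projection onto a convex set in a Hilbert space.

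The first step is to derive the first-order (variational) optimality condition for $p := \Pi_\calC(x)$. Since $\calC$ is convex, for any $v \in \calC$ and $t \in [0,1]$ the point $p + t(v-p) = (1-t)p + tv$ lies in $\calC$, so the scalar function $\phi(t) := \|p + t(v-p) - x\|^2$ is minimized at $t = 0$. Expanding via the bilinear form gives
\begin{equation*}
\phi(t) = \|p - x\|^2 + 2t\,\langle v - p, H(p - x)\rangle + t^2 \|v - p\|^2,
\end{equation*}
and differentiating at $0^+$ yields $\langle v - p,\, H(p - x)\rangle \ge 0$, equivalently
\begin{equation*}
\langle v - p,\, H(x - p)\rangle \le 0 \qquad \text{for all } v \in \calC.
\end{equation*}

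The second step is to expand the squared distance $\|v - x\|^2$ by writing $v - x = (v - p) + (p - x)$ and again using bilinearity:
\begin{equation*}
\|v - x\|^2 \;=\; \|v - p\|^2 + 2\,\langle v - p, H(p - x)\rangle + \|p - x\|^2.
\end{equation*}
The cross term is nonnegative by the variational inequality from the first step, and the last term is trivially nonnegative, so $\|v - x\|^2 \ge \|v - p\|^2$, and taking square roots yields $\|\Pi_\calC(x) - v\| \le \|x - v\|$ as claimed.

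The only real subtlety is the justification of the first-order step when the seminorm is degenerate (so $H$ is only PSD, not PD): one must check that the quadratic expansion of $\phi$ still holds and that the argmin defining $p$ is not vacuous. Both are fine because $\|\cdot\|_H^2$ is a well-defined continuous convex quadratic form regardless of degeneracy, and the computation above never requires invertibility of $H$. Everything else is routine algebra with the bilinear form, so I expect no further obstacles.
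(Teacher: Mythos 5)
Your proof is correct and takes essentially the same route as the paper: derive the variational optimality inequality $\langle v-p,\,H(x-p)\rangle \le 0$ by perturbing along the segment from $p$ to $v$, then expand the squared seminorm (your decomposition $v-x=(v-p)+(p-x)$ is cleaner and more direct than the paper's somewhat circuitous rearrangement of $\|x^*-v\|^2$, but it is the same underlying idea). You are also right to flag that the statement is written too generally — the quadratic expansion that both you and the paper rely on requires the seminorm to come from a symmetric PSD bilinear form, and the contraction property genuinely fails for non-Hilbertian seminorms, though this is harmless since the paper only ever applies the lemma to $\|\cdot\|_{H_t}$.
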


\begin{proof}
Notice the contraction and the projection are measured using the same seminorm. 

Let $x^* = \Pi_{\calC}(x)$, and $v\in \calC$, $v\neq x^*$

We first prove that 
\begin{equation}
    \label{eq:negative_dot_product}
    \langle x-x^*,v-x^* \rangle \leq 0 
\end{equation}
 Let $\alpha \in (0,1)$, then by convexity of $\calC$, $x^*+\alpha(v-x^*) \in \calC$, so by optimality of $x^*$
\begin{align*}
    \|x-x^* \|^2 & \leq \|x-(x^*+\alpha(v-x^*))\|^2 \\
    & = \|x-x^* \|^2 + \alpha^2\|v-x^* \|^2-2\alpha \langle x-x^*, v-x^2 \rangle\\
    \iff \langle x-x^*, v-x^2 \rangle & \leq \frac{\alpha}{2}\|v-x^* \|^2
\end{align*}
Which is true for $\alpha$ arbitrarily small, yielding Equation~\ref{eq:negative_dot_product}

Now, we start going backwards, 
\begin{align*}
     \|x -v\|^2  &\geq \| x^* - v\|^2\\
     &= \|x^* - x + x-v \| ^2\\
    &= \| x^* - x\|^2 +\| x-v\|^2 + 2\langle x^*-x,x-v \rangle\\
    & = \| x^* - x\|^2 +\| x-v\|^2 + 2\langle x-x^*,v-x^*+x^*-x \rangle\\
    &= -\| x^* - x\|^2 +\| x-v\|^2+ 2\langle x-x^*,v-x^* \rangle
\end{align*}
Cancelling terms, and rearranging, 

\begin{equation*}
    \iff \| x^* - x\| \geq 2\langle x-x^*,v-x^* \rangle
\end{equation*}

which is true by non-negativity of semi-norms and equation \ref{eq:negative_dot_product}

\end{proof}

\subsection{Lemma~\ref{lemma:good_set} }
\label{proof:good_set}
\textit{For $H_t$, $G_t$, $B_t$ as in the statement of the theorem, assuming $\lambda_{_{\min>0}}(G_t) > \alpha(t)$ for $t > t_0$, and $\text{rank}(G_t)\geq 1$, for any $\eta > 0$ there is some universal $c$ such that the event
$$E = \bigcap_{t=t_0}^T \left(E_t  \cap F_t\right)= \bigcap_{t=t_0}^T \left\{ \|P^{(j_t)}_{G_t}B_t\|_{op} \leq c\sqrt{\text{rank}(G_t)}\log(t)\right\} \cap \left\{ \|B_t\|_{op} \leq c\sqrt{p}\log(t)\right\}$$
satisfies
$$\mathbf{Pr}(E) > 1-\eta.$$
}
\begin{proof}
The following lemma is used to ensure the spectrum of $B_t$ is bounded with high probability.
\begin{lem}[Corollary 2.3.5 in \cite{tao2012topics}]
\label{lemma:operator_norm}
Suppose that the coefficients of matrix $M \in \mathbb{R}^{p\times p}$ are independent, have zero mean and uniformly bounded by 1. Then there exists absolute constants $C,c >0$ such that for all $A \geq C$

\begin{equation*}
    \Pr( \| M \|_{op} > A\sqrt{p}) \leq C\exp(-cAp)
\end{equation*}
\end{lem}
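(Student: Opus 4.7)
The standard route is an $\epsilon$--net argument combined with a subgaussian concentration bound for fixed bilinear forms. Recall that
\[
\|M\|_{op} \;=\; \sup_{x,y \in S^{p-1}} \langle Mx, y\rangle,
\]
where $S^{p-1}$ is the unit sphere in $\R^p$. The plan is to discretize this supremum, handle each fixed pair $(x,y)$ via concentration, and union--bound.

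\textbf{Step 1: Concentration for a fixed pair.} Fix unit vectors $x,y\in S^{p-1}$. Then
\[
\langle Mx, y\rangle \;=\; \sum_{i,j} M_{ij}\, x_j\, y_i
\]
is a sum of independent, zero--mean random variables, each bounded in absolute value by $|x_j y_i|$, with $\sum_{i,j} (x_j y_i)^2 = \|x\|^2\|y\|^2 = 1$. Hoeffding's inequality then yields
\[
\Pr\bigl(|\langle Mx,y\rangle| > t\bigr) \;\leq\; 2\exp(-t^2/2)
\]
for every $t > 0$.

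\textbf{Step 2: Discretization via an $\epsilon$--net.} Let $N$ be a $1/4$--net of $S^{p-1}$ of cardinality at most $9^p$ (a standard volume bound). A routine argument shows that
\[
\|M\|_{op} \;\leq\; 2 \sup_{x,y \in N}\,|\langle Mx, y\rangle|,
\]
so it suffices to control the supremum over the net. Choosing $t = A\sqrt{p}/2$ and union--bounding over $|N|^2 \leq 81^p$ pairs,
\[
\Pr\bigl(\|M\|_{op} > A\sqrt{p}\bigr) \;\leq\; 2 \cdot 81^p \cdot \exp\bigl(-A^2 p / 8\bigr) \;=\; 2\exp\bigl(p\log 81 - A^2 p/8\bigr).
\]

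\textbf{Step 3: Absorb the net cardinality into the tail.} For $A$ exceeding a suitable absolute constant $C$, the exponent satisfies $A^2/8 - \log 81 \geq A/16$ (say), so the bound simplifies to $C' \exp(-c A p)$ for appropriate absolute constants $C',c > 0$, which matches the statement. The only subtlety is choosing $C$ large enough that $A^2/8$ dominates $\log 81$ with room to spare, so that the remaining exponent is linear in $A$ rather than quadratic; this is purely a matter of tuning the constants.

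\textbf{Main obstacle.} There is no deep obstacle: the argument is entirely standard random--matrix theory. The only non--routine aspect is bookkeeping the constants so that the tail reads $\exp(-cAp)$ rather than the slightly stronger $\exp(-cA^2 p)$ that Hoeffding naturally produces; the weaker form is what one gets after absorbing $\log|N|^2 = \Theta(p)$ into the exponent under the assumption $A\geq C$. Since the lemma is cited verbatim from Tao's monograph, one may alternatively simply invoke it as a black box and omit the reproduction of the $\epsilon$--net proof.
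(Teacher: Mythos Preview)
Your $\epsilon$--net plus Hoeffding argument is correct and is the standard proof of this concentration bound. The paper does not supply its own proof of this lemma at all: it is quoted verbatim as Corollary~2.3.5 of Tao's monograph and used as a black box inside the proof of Lemma~\ref{lemma:good_set}, exactly as you suggest in your final paragraph.
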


We will consider the complement of these sets $E_t$, and show that their probabilities sum up to some small constant. Let $\eta > 0$.

Begin by noting that Lemma~\ref{lemma:operator_norm} immediately implies
$$\Pr\left[ \|B_t\|_{op} \geq A\sqrt{p}\log(t)\right] \leq C \exp\left(-cA\log(t)p\right)$$

and therefore an appropriate choice of $A$ can be made such that
$$\sum_{t=0}^T \Pr\left[ \|B_t\|_{op} \geq A\sqrt{p}\log(t)\right] \leq \eta.$$

Now, $P^{(j_t)}_{G_t}$ can be written as multiplication by $V\Sigma V^T$ for $V$ orthogonal, $\Sigma$ diagonal matrix of 1s and 0s associated to the appropriate eigenvalues. Since the GOE is invariant under orthogonal conjugations, this implies that the distribution of $P^{(j_t)}_{G_t}B_t$ is identical to the GOE distribution on matrices of $\text{rank}(C_t) \leq \text{rank}(G_t)$. That is, for $\simeq$ denoting distributional equality,

$$P^{(j_t)}_{G_t}B_t = V\Sigma V^T B_t \simeq V\Sigma V^T V B_t V^T = V\Sigma B_t V^T \simeq V M V^T$$
where $M$ is the GOE over rank$(\Sigma) \times $rank$(\Sigma)$ matrices.

Therefore applying Lemma \ref{lemma:operator_norm} again to this lower-dimensional GOE, we obtain
$$\Pr\left[ \|P^{(j_t)}_{G_t}B_t\|_{op} \geq A\sqrt{\text{rank}(G_t)}\log(t)\right] \leq C \exp\left(-cA\log(t)\text{rank}(G_t)\right) \leq C \exp\left(-cA \log(t)\right)$$

and again an appropriate choice of $A$ implies

$$\Pr\left[ \|B_t|_{C_t}\|_{op} \geq A\sqrt{\text{rank}(G_t)}\log(t)\right] \leq \eta$$

and the conclusion follows.
\end{proof}







\subsection{Davis-Kahan Theorem}
\label{thm:davis-kahan}
\begin{thm}[Davis-Kahan Theorem]
            For any matrices $A$ and $B$ of like dimensions, for which $\lambda_i(A) > \lambda_j(B)$, 
               
               \begin{equation}
                \label{eq:david-kahan}
                \|P_A^i(I-P_B^{j-1})\|_{op} \leq \frac{\| A-B\|_{op}}{\lambda_i(A) - \lambda_j(B)}
            \end{equation}
        \end{thm}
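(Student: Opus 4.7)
The plan is to prove the inequality by reducing it to a Sylvester equation on the matrix of inner products between the two relevant invariant subspaces, and then bounding the solution of that equation via an integral representation. I treat $A$ and $B$ as symmetric, since the statement refers to eigenvalues and the projections $P_A^i, P_B^{j-1}$ onto spectral subspaces.

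First, I would set up bases. Let $V \in \mathbb{R}^{p\times i}$ have orthonormal columns spanning the top-$i$ eigenspace of $A$, so that $AV = V\Lambda_A$ with $\Lambda_A$ diagonal and every diagonal entry at least $\lambda_i(A)$, and let $W$ have orthonormal columns spanning the image of $I - P_B^{j-1}$, so that $BW = W\Lambda_B$ with $\Lambda_B$ diagonal and every diagonal entry at most $\lambda_j(B)$. Then $P_A^i = VV^T$ and $I - P_B^{j-1} = WW^T$, and since $V,W$ are isometries,
\begin{equation*}
  \|P_A^i(I - P_B^{j-1})\|_{op} \;=\; \|VV^T WW^T\|_{op} \;=\; \|V^T W\|_{op}.
\end{equation*}

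Second, I would derive a Sylvester equation for $M := V^T W$. Computing
\begin{equation*}
  V^T(A - B)W \;=\; (AV)^T W - V^T(BW) \;=\; \Lambda_A M - M \Lambda_B,
\end{equation*}
and setting $N := V^T(A - B)W$, which satisfies $\|N\|_{op} \leq \|A-B\|_{op}$, we obtain $\Lambda_A M - M\Lambda_B = N$. By hypothesis $\delta := \lambda_i(A) - \lambda_j(B) > 0$, so every diagonal entry of $\Lambda_A$ exceeds every diagonal entry of $\Lambda_B$ by at least $\delta$. Shifting $A$ and $B$ by a common scalar (which changes neither $A-B$ nor the gap), I may assume $\lambda_{\max}(\Lambda_B) < 0 < \lambda_{\min}(\Lambda_A)$.

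Third, I would bound $\|M\|_{op}$ via the integral formula
\begin{equation*}
  M \;=\; \int_0^\infty e^{-s\Lambda_A}\, N\, e^{s\Lambda_B}\, ds,
\end{equation*}
which solves the Sylvester equation (verified by differentiating and using that $\Lambda_A,\Lambda_B$ commute with their exponentials, and that the integrand vanishes as $s\to\infty$ under the sign assumptions above). Taking operator norms termwise yields
\begin{equation*}
  \|M\|_{op} \;\leq\; \|N\|_{op}\!\int_0^\infty e^{-s(\lambda_{\min}(\Lambda_A)-\lambda_{\max}(\Lambda_B))}\,ds \;\leq\; \frac{\|A-B\|_{op}}{\delta},
\end{equation*}
which is the desired bound.

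The main obstacle is the third step. The entrywise identity $M_{k\ell} = N_{k\ell}/((\Lambda_A)_{kk} - (\Lambda_B)_{\ell\ell})$ is immediate, but it controls only the entries of $M$, not its operator norm; passing cleanly from $\|N\|_{op}$ to $\|M\|_{op}$ requires the integral representation above (or, alternatively, the Heinz/Bhatia--Davis argument). Once that is in place, the previous two steps are essentially bookkeeping with orthonormal bases of invariant subspaces.
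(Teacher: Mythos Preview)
Your argument is correct: the reduction to $\|V^T W\|_{op}$, the Sylvester equation $\Lambda_A M - M\Lambda_B = V^T(A-B)W$, and the integral representation bound are all standard and sound for symmetric $A,B$. Note, however, that the paper does not supply its own proof of this statement; it merely quotes the Davis--Kahan $\sin\theta$ theorem as a classical result (citing Davis 1963) and uses it as a black box in bounding the fourth term of the regret decomposition. So there is no ``paper's approach'' to compare against---your proposal simply fills in a proof the authors omitted, and does so via the Sylvester-equation/integral route that is one of the two textbook arguments (the other being a direct resolvent or contour-integral calculation).
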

\subsection{Lemma~\ref{lem:h-inverse}}
\label{proof:h-inverse}
\textit{Define $C = A+B$, for $A,B,C$ linear operators on $\boldR^n$ such that $2 \|B\|_{op} \leq \lambda_{min}(A)$, and $B$ and $(AB^{-1}+I)$ invertible. Then for $v\in \text{im}(A) \cap \text{im}(C)$, $u \in \boldR^n$, $$\left|\langle u, C^{-1}v \rangle \right|\leq \frac{4}{3}\left|u^TA^{-1}v\right|$$}

\begin{proof}
Since $A$, $B$, $A + B$ and $AB^{-1}+I$ can be inverted for $v \in \text{im}(A) \cap \text{im}(C)$, we can use the Woodbury identity (in its special case as Hua's identity, which does not rely on global but rather pointwise invertibility on its intermediate terms), to calculate $C^{-1} = (A +B)^{-1}$. We additionally use invertibility of $AB^{-1} + I$ to invert the order of the Moore-Penrose pseudoinversion on the product $(I + AB^{-1})A$ (see Corollary 1.4.1 of \cite{campbellmeyerinverses}).
            
        \begin{align}
            \nonumber \left|\langle u, C^{-1}v \rangle \right|&= \left|u^TC^{-1}v \right|\\
            \nonumber &= \left|u^T(A^{-1}-(AB^{-1}A + A)^{-1})v\right|\\
            \nonumber &= \left|v^TA^{-1}v - u^T (AB^{-1}A + A)^{-1})v \right|\\
            \nonumber & \leq  \left|u^TA^{-1}v\right| + |\mathbf{Tr}(u^T A^{-1}(B^{-1}A + I)P_{\text{im}(A)})^{-1}v) |\\
        \end{align}
Where the last step follows by the triangle inequality, and because the trace of a scalar is just that scalar.
Using the cyclic property of the trace,
            \begin{align}
            \nonumber &=\left|u^TA^{-1}v\right| + |\mathbf{Tr}(uv^T A^{-1}((AB^{-1} + I)P_{\text{im}(A)})^{-1})|\\
            \end{align}
Using the trace duality property, 
            \begin{align}
     \label{eq:holder} &\leq \left|u^TA^{-1}v\right| + \| u^TA^{-1}v\|_1 \|(AB^{-1} + I)^{-1})\|_{\infty}\\
            & \leq \left|u^TA^{-1}v\right| \left(1+\max_{i,j} \frac{\lambda_i(B)}{\lambda_i(B)+\lambda_j(A)}\right) 
     \end{align}

    We have that  $\lambda_j(A) \geq \lambda_{n}(A)$.  Since $2 \|B\|_{op} \leq  \lambda_j(A)$ for all $j$, the term on the right is maximized when $\lambda_{j}(A) = \lambda_{n}(A)$, and $\lambda_i(B) = \frac{\lambda_{n}(A)}{2}$
     \begin{align}
    \nonumber   &  \leq v^TA^{-1}v \left(1+\frac{\lambda_{n}(A)/2}{\lambda_n(A)/2+\lambda_{n}(A)} \right) \\
   \nonumber     & \leq v^TA^{-1}v \left(1+1/3\right) \\
     &\leq \frac{4}{3} v^TA^{-1}v
        \end{align}
        
\textbf{Remark: } Notice that in step \ref{eq:holder}, the $L_1$ and $L_{\infty}$ norms can be replaced by any $p$ and $q$ such than $\frac{1}{p} + \frac{1}{q} = 1$ to obtain a better bound. 

\end{proof}

\subsection{Corollary~\ref{cor:smooth_convergence}}
\label{proof:smooth_convergence}
\textit{Assume the norm of gradients is decreasing as $L(t)= o(1)$, and constant rank for the gradient subspace. Let $\sigma_b(t) = O(L(t))$, then the overall regret of $\nadagrad$ is $O(\tr(G_T)/T) =o(1/\sqrt{T})$. 
}

\begin{proof}
We first introduce the following inequality that has been previously used in optimization, see Lemma 1 in \cite{streeter2010less} for a proof. 

\begin{lem}
\label{lemma:inequality-sum-non-negative}
For any non-negative real numbers $a_1,a_2,a_3, \cdots , n, $, 

$$ \sum_{i=1}^n \frac{a_i}{\sqrt{\sum_j=1^i}} \leq2\sqrt{\sum_{i=1}^n a_i}$$
\end{lem}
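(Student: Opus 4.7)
The plan is to prove the inequality by a telescoping argument. Introduce the partial sums $S_i := \sum_{j=1}^i a_j$ with the convention $S_0 = 0$, and with the usual convention that the term $a_i/\sqrt{S_i}$ is defined to be $0$ whenever $a_i = 0$ (so that the case $S_i = 0$ causes no trouble, since it forces $a_1 = \dots = a_i = 0$). The key identity is the difference-of-square-roots factorization
\[
\sqrt{S_i} - \sqrt{S_{i-1}} \;=\; \frac{S_i - S_{i-1}}{\sqrt{S_i} + \sqrt{S_{i-1}}} \;=\; \frac{a_i}{\sqrt{S_i} + \sqrt{S_{i-1}}}.
\]

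Using $\sqrt{S_{i-1}} \leq \sqrt{S_i}$ in the denominator (since the $a_j$ are nonnegative), I bound the right-hand side from below by $\frac{a_i}{2\sqrt{S_i}}$, which after rearrangement gives, for each index with $a_i > 0$,
\[
\frac{a_i}{\sqrt{S_i}} \;\leq\; 2\bigl(\sqrt{S_i} - \sqrt{S_{i-1}}\bigr).
\]
Summing this inequality from $i=1$ to $n$ telescopes the right-hand side to $2(\sqrt{S_n} - \sqrt{S_0}) = 2\sqrt{S_n}$, yielding exactly the claim
\[
\sum_{i=1}^n \frac{a_i}{\sqrt{S_i}} \;\leq\; 2\sqrt{\sum_{i=1}^n a_i}.
\]

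There is no substantive obstacle here. The only care needed is to dispatch the degenerate cases cleanly: indices for which $a_i = 0$ contribute nothing to either side (by the stated convention), and for the first index where $S_{i-1} = 0$ but $S_i > 0$, the per-term bound reduces to $\sqrt{a_i} \leq 2\sqrt{a_i}$, which holds trivially, so the telescoping still goes through as written.
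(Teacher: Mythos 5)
Your proof is correct. The paper does not supply its own argument for this lemma; it defers to Lemma~1 of Streeter and McMahan (2010), whose proof is a short induction on $n$. Your telescoping argument is the standard alternative: both rest on the same per-term estimate $a_i/\sqrt{S_i} \le 2\bigl(\sqrt{S_i} - \sqrt{S_{i-1}}\bigr)$ (equivalently, the concavity bound $\sqrt{S_{i-1}} \ge \sqrt{S_i} - a_i/(2\sqrt{S_i})$), and the induction merely packages the telescoping cancellation one step at a time. Your direct summation is arguably cleaner and makes the cancellation transparent, while the inductive form more closely mirrors the ``add one more round'' structure common in online-learning regret analyses. The degenerate cases ($a_i = 0$, and a leading block with $S_{i-1} = 0$) are handled correctly, so the argument is complete.
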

Notice that $\tr(G_t) = O(\sqrt{\sum_{s=1}^t L^2(s)})$. Then selecting the optimal learning rate in expression \ref{eq:noisy_ada_public_data}, we have:

\begin{align}
    \nonumber \mathbb{E}[ \regret{\calF}{\nadagrad}] &\leq O \left(\mathbb{E}\left[ \frac{\sqrt{\tr(G_T)^2 + \tr(G_T)\sum_t\sigma^2_b(t) \tr(G_t^{-1})}}{T} + \gamma\right] \right)\\
  \nonumber  & \leq O \left(\mathbb{E}\left[ \frac{\sqrt{\tr(G_T)^2 + \tr(G_T)\sum_tL^2(t)/\sqrt{\sum_{s=1}^t L^2(s)}}}{T} + \gamma\right] \right)\\ 
  \label{eq:l1-l2norm}  & \leq O \left(\mathbb{E}\left[ \frac{\sqrt{\tr(G_T)^2 + \tr(G_T)\sqrt{\sum_tL^2(t)}}}{T} + \gamma\right] \right)\\ 
  \nonumber  & \leq O \left(\mathbb{E}\left[ \frac{\sqrt{\tr(G_T)^2 + \tr^2(G_T)}}{T} + \gamma\right] \right)\\ 
   \nonumber & \leq O \left(\mathbb{E}\left[ \frac{\tr(G_T)}{T} + \gamma\right] \right)\\ 
\end{align}

Where Equation~\ref{eq:l1-l2norm} follows from the non-negativity of $L^2(t)$ and inequality in Lemma~\ref{lemma:inequality-sum-non-negative}
\end{proof}

\subsection{Lemma~\ref{lem:preconditioner_sensitivity} - pre-conditioner sensitivity}
\textit{Let $G_t = \sqrt{\sum_t\nabla_t\nabla_t^T}$ be the preconditioner formed at iteration $t$. Let $\ell_i$ be an $L-$Lipschitz loss function on datapoint $d_i$ for $i=1,...,n$, and  $n$ the total number of records. Then the preconditioner's $\ell_2 - $sensitivity is given by $\Delta_2(G_t) = O\left( \sqrt{ \frac{tL}{n}} \right)$ }
\begin{proof}
\label{proof:preconditioner_sensitivity}
Let $G_t^{D}$ be the preconditioner computed at iteration $t$ with dataset $D = \{d_1, ..., d_n \}$. Let $D'$ be a neighboring dataset, w.l.o.g. $d_n \notin D'$.

Let $g_{t,j} =\frac{\nabla\ell(\theta_t;d_j)}{n}$. By the $L-$Lipschitz condition, $\|g_{i,j}\| \leq \frac{L}{n}$ Recall that $G^D_t = \sqrt{\sum_{i=1}^t\nabla_i\nabla_i^T}$, and $\nabla^D_i = \sum_{j \in D}g_{i,j}$.   Let $K_t = \sum_{i=1}^t (\sum_{j=1}^{n-1}g_{i,j})(\sum_{j=1}^{n-1}g_{i,j})^T$

We have then

\begin{align}
  \nonumber  \|G_t^D - G_t^{D'} \|_2 & = \left\lVert\sqrt{K_t + \sum_{i=1}^t{\sum_{j=1}^ng_{i,n}g_{i,j}}} - \sqrt{K_t} \right\rVert_F\\ 
  \nonumber& \leq \left\lVert \sqrt{\sum_{i=1}^t{\sum_{j=1}^ng_{i,n}g_{i,j}}} \right\rVert_F\\
  & = \sqrt{ \tr\left( \sqrt{\sum_{i=1}^t{\sum_{j=1}^ng_{i,n}g^T_{i,j}}} \sqrt{\sum_{i=1}^t{\sum_{j=1}^ng_{i,n}g^T_{i,j}}}^T\right) } \\
  &= \sqrt{\tr \left( \sum_{i=1}^t\sum_{j=1}^ng_{i,n}g_{i,j}^T \right)} \\
  & \leq \frac{L\sqrt{t}}{n}
\end{align}

\end{proof}

\subsection{Corollary~\ref{cor:private_ada}}
\label{proof:private_ada}
\textit{
Assume the subspace spanned by accumulated gradients is bounded by a constant $k<p$. With appropriate choice of $\eta$, and for $\gamma = O(\frac{1}{\epsilon n})$, the excess risk of noisy-subspace $\nadagrad$ is $O(\frac{\sqrt{\log(1/\delta)}}{\epsilon n})$. 
}

\begin{proof}
Recall that $\sigma_b(t) = O(\frac{L\sqrt{T\log(1/\delta)}}{\epsilon n})$, assume $L=O(1)$, and assume gradients norms are decreasing as $\|\nabla_t\|=O(\frac{1}{t^{\alpha}})$, then $\tr(G_T) = O(\sqrt{\sum_t 1/t^{2\alpha}})=O(T^{\frac{1-2\alpha}{2}})$ and $\sum_t\tr(G^{-1}_t) = O\left(\sum_t \frac{1}{t^{\frac{1-2\alpha}{2}}}\right) = O(T^{\frac{1+2\alpha}{2}})$. 

Replacing these values in Theorem~\ref{thm:noisy_ada_noisy_public_subspace}, 

\begin{align*}
    \mathbb{E}\left[\risk{\privT}\right]\leq \sqrt{\frac{T^{1-2\alpha}}{T^2} + \frac{T^{\frac{1-2\alpha}{2}} T T^{(1+2\alpha)/2}}{\epsilon^2n^2T}} + \gamma \\
    &\leq \sqrt{\frac{1}{T^{1+2\alpha}} + \frac{1}{\epsilon^2n^2}} + \frac{1}{\epsilon n}
\end{align*}

Then letting $T = \epsilon n)^{2/(1+2\alpha)}$ we obtain the desired result. 
\end{proof}

\subsection{Unconstrained DP-GD}
\label{proof:unconstrained_dpgd}
\begin{algorithm}
\SetAlgoLined
\KwIn{noise variance $\sigma^2$, number of iterations $T$, learning rate $\eta$, gradient oracle $\nabla_t$}
 \For{t=1 \text{ to } T}{
  $\tilde{\nabla}_t \gets \nabla_t + b_t \quad \text{ where } b_t \sim \mathcal{N}(0,\sigma_b^2 I_p)$\;
 $ \theta_{t+1}   =\theta_{t} - \eta\tilde{\nabla}_t$
 }
 \KwResult{$\frac{1}{T} \sum_{t=1}^T \theta_t$}
 \caption{DP-GD: Differentially private gradient descent}
 \label{alg:DP-GD}
\end{algorithm}
\mypar{Theorem~\ref{thm:unconstrained_dpgd}}
\textit{
Let $\theta_0 = \bold0$ be the initial point of $\dpgd$. Let $\theta^* = \argmin\limits_{\theta \in \mathbb{R}^p}f(\theta) $ and $ M=VV^T $ be the projector to the gradients eigenspace. Letting $L$ be the gradient $\ell_2-$norm bound, setting the constraint set $\calC = \mathbb{R}^p$, and running $\dpgd$ on $\calL(\theta;D)$ for $T=\epsilon^2n^2$ and appropriate learning rate $\eta$, 
\begin{equation*}
    \mathbb{E}[\calL(\ptheta;D)]-\calL(\theta^*;D)\leq \frac{L\|\theta^*\|_{M}\sqrt{1+2\text{rank}(M)\log(1/\delta)}}{\epsilon n}
\end{equation*}
}
\begin{proof}
We follow standard arguments for analyzing gradient descent \cite{song2020characterizing,bubeck2015convex} . 
Recall that $M$ is the projector to gradients eigenspace. 

We have that 
\begin{align*}
   \| \theta_{t+1} - \theta^*\|^2_M &= \| \theta_{t} - \theta^* - \eta(\nabla_t + b_t)\|^2_M \\  
   & \leq  \| \theta_{t} - \theta^*\|^2_M -2\eta \langle\nabla_t + b_t, \theta_t - \theta^* \rangle + \eta^2\|\nabla_t + b_t\|_M^2 \\
\end{align*}

Taking expected value respect to $b_t$ conditioned on $b_1, ..., b_{t-1}$, 

\begin{align*}
    &\leq  \| \theta_{t} - \theta^*\|^2_M -2\eta \langle\nabla_t, \theta_t - \theta^* \rangle + \eta^2(L^2 + \rank(M) \sigma^2) 
\end{align*}
Here we used that $\nabla_t$ lies in the subspace $M$, and $b_t \sim \calN(0, \sigma^2I_p)$

Rearranging,and taking expectation over $b_1, ..., b_{t-1}$

\begin{equation}
\label{eq:eq_sgd}
   \mathbb{E}\left[ \langle\nabla_t, \theta_t - \theta^* \rangle \right] \leq \frac{1}{2\eta}(\mathbb{E}\left[\| \theta_{t} - \theta^*\|^2_M -\| \theta_{t+1} - \theta^*\|^2_M \right]) + \frac{\eta}{2}(L^2 + \rank(M) \sigma^2) 
\end{equation}

By convexity, 

\begin{equation*}
    \calL(\privT;D) - \calL(\theta^*;D) \leq \frac{1}{T} \sum_{t=1}^T\langle\nabla_t, \theta_t-\theta^*\rangle
\end{equation*}

So taking the sum over $t$ in Equation~\ref{eq:eq_sgd} and using linearity of expectation we get

\begin{align*}
    \mathbb{E}\left[\calL(\privT;D)\right] - \calL(\theta^*;D) \leq \frac{1}{2\eta T}\| \theta_{0} - \theta^*\|^2_M + \frac{\eta}{2}(L^2 + \rank(M) \sigma^2) 
\end{align*}

Taking the optimal learning rate $\eta$, 

\begin{align*}
    &\leq \|\theta^*\|_M\sqrt{\frac{L^2 + rank(M)\sigma^2}{T}} 
\end{align*}

Setting $\sigma = O(\frac{L\sqrt{T\log(1/\delta)}}{\epsilon n})$, and $T=\epsilon^2 n^2$ we obtain the desired result:

\begin{equation*}
    \mathbb{E}[\calL(\ptheta;D)]-\calL(\theta^*;D)\leq \frac{L\|\theta^*\|_{M}\sqrt{1+2\text{rank}(M)\log(1/\delta)}}{\epsilon n}
\end{equation*}

\end{proof}

\end{document}